\theoremstyle{plain}
\newtheorem{theorem}{Theorem}[section]
\newtheorem{proposition}[theorem]{Proposition}
\newtheorem{lemma}[theorem]{Lemma}
\newtheorem{corollary}[theorem]{Corollary}
\theoremstyle{definition}
\newtheorem{assumption}[theorem]{Assumption}
\theoremstyle{remark}
\newtheorem{remark}[theorem]{Remark}
\icmltitlerunning{Robust Bayesian Ambiguity Sets}
\DeclareSymbolFont{tipa}{T3}{cmr}{m}{n}
\DeclareMathAccent{\post}{\mathalpha}{tipa}{16}
\DeclareMathOperator*{\argmin}{arg\,min}
\DeclareMathOperator*{\DP}{DP}
\DeclareMathOperator{\simiid}{\overset{iid}{\sim}}
\DeclareMathOperator*{\st}{s.t.}
\DeclareMathOperator*{\Dir}{Dir}
\DeclareMathOperator*{\ri}{ri}
\newcommand{\cA}{{\cal A}}
\newcommand{\cB}{{\cal B}}
\newcommand{\cC}{{\cal C}}
\newcommand{\cH}{{\cal H}}
\newcommand{\cI}{{\cal I}}
\newcommand{\cK}{{\cal K}}
\newcommand{\cN}{{\cal N}}
\newcommand{\cP}{{\cal P}}
\newcommand{\cX}{{\cal X}}
\newcommand{\bE}{{\mathbb E}}
\newcommand{\bF}{{\mathbb F}}
\newcommand{\bP}{{\mathbb P}}
\newcommand{\bQ}{{\mathbb Q}}
\newcommand{\bR}{{\mathbb R}}
\newcommand{\NPL}{\Pi_\text{NPL}}
\newcommand{\MMD}{\mathbb{D}_k}
\newcommand{\KL}{d_{\text{KL}}}
\newcommand{\truedist}{\bP^\star}
\newcommand{\bdro}{BDRO\xspace}
\newcommand{\drobas}{DRO-BAS\xspace}
\newcommand{\drobaspe}{DRO-BAS\textsubscript{PE}\xspace}
\newcommand{\drobaspp}{DRO-BAS\textsubscript{PP}\xspace}
\newcommand{\drorobas}{DRO-RoBAS\xspace}
\newcommand{\bas}{BAS\xspace}
\newcommand{\robas}{RoBAS\xspace}
\newcommand{\baspe}{BAS\textsubscript{PE}\xspace}
\newcommand{\baspp}{BAS\textsubscript{PP}\xspace}
\newcommand{\thm}{Theorem\xspace}
\newcommand{\lem}{Lemma\xspace}
\newcommand{\fig}{Fig.\xspace}
\newcommand{\pred}{\bP_n^{\text{pred}}}
\newcommand{\prednpl}{\bP_n^{\text{pred(NPL)}}}
\newcommand{\prednplapprox}{\hat{\bP}_n^{\text{pred(NPL)}}}
\newcommand{\dphatxi}{\widehat{\DP}_{\xi_{1:n}}}
\newcommand{\dpxi}{\DP_{\xi_{1:n}}}
\newcommand{\thetamapk}{\theta_k}
\newcommand{\oset}[2]{{\mathpalette\o@set{{#1}{#2}}}}
\newcommand{\o@set}[2]{\o@@set{#1}#2}
\newcommand{\o@@set}[3]{%
  \vbox{\offinterlineskip
    \ialign{\hfil##\hfil\cr
      $\m@th\o@set@demote{#1}#2$\cr
      \noalign{\vskip0.2pt}
      $\m@th#1#3$\cr
    }%
  }%
}
\newcommand{\o@set@demote}[1]{%
  \ifx#1\displaystyle\scriptstyle\else
  \ifx#1\textstyle\scriptstyle\else
  \scriptscriptstyle\fi\fi
}
\newenvironment{talign*}
{\let\displaystyle\textstyle\csname align*\endcsname}
{\endalign}
\definecolor{baspe}{HTML}{984ea3}
\definecolor{baspp}{HTML}{4daf4a}
\definecolor{bdro}{HTML}{ff7f00}
\definecolor{robas}{HTML}{377eb8}
\begin{document}

\twocolumn[
\icmltitle{Decision Making under Model Misspecification:\\ DRO with Robust Bayesian Ambiguity Sets}

% It is OKAY to include author information, even for blind
% submissions: the style file will automatically remove it for you
% unless you've provided the [accepted] option to the icml2025
% package.

% List of affiliations: The first argument should be a (short)
% identifier you will use later to specify author affiliations
% Academic affiliations should list Department, University, City, Region, Country
% Industry affiliations should list Company, City, Region, Country

% You can specify symbols, otherwise they are numbered in order.
% Ideally, you should not use this facility. Affiliations will be numbered
% in order of appearance and this is the preferred way.
\icmlsetsymbol{equal}{*}

\begin{icmlauthorlist}
\icmlauthor{Charita Dellaporta}{equal,xxx,zzz}
\icmlauthor{Patrick O'Hara}{equal,yyy}
\icmlauthor{Theodoros Damoulas}{yyy,zzz}
% \icmlauthor{Firstname4 Lastname4}{sch}
% \icmlauthor{Firstname5 Lastname5}{yyy}
% \icmlauthor{Firstname6 Lastname6}{sch,yyy,comp}
% \icmlauthor{Firstname7 Lastname7}{comp}
% %\icmlauthor{}{sch}
% \icmlauthor{Firstname8 Lastname8}{sch}
% \icmlauthor{Firstname8 Lastname8}{yyy,comp}
%\icmlauthor{}{sch}
%\icmlauthor{}{sch}
\end{icmlauthorlist}

\icmlaffiliation{yyy}{Department of Computer Science, University of Warwick, Coventry, UK}
\icmlaffiliation{xxx}{Department of Statistics, University College London, London, UK}
\icmlaffiliation{zzz}{Department of Statistics, University of Warwick, Coventry, UK}
% \icmlaffiliation{comp}{Company Name, Location, Country}
% \icmlaffiliation{sch}{School of ZZZ, Institute of WWW, Location, Country}

\icmlcorrespondingauthor{Charita Dellaporta}{h.dellaporta@ucl.ac.uk}
\icmlcorrespondingauthor{Patrick O'Hara}{Patrick.H.O-Hara@warwick.ac.uk}

% You may provide any keywords that you
% find helpful for describing your paper; these are used to populate
% the "keywords" metadata in the PDF but will not be shown in the document
% \icmlkeywords{Machine Learning, ICML}

\vskip 0.3in
]

% this must go after the closing bracket ] following \twocolumn[ ...

% This command actually creates the footnote in the first column
% listing the affiliations and the copyright notice.
% The command takes one argument, which is text to display at the start of the footnote.
% The \icmlEqualContribution command is standard text for equal contribution.
% Remove it (just {}) if you do not need this facility.

% \printAffiliationsAndNotice{}  % leave blank if no need to mention equal contribution
\renewcommand{\isaccepted}{}

\printAffiliationsAndNotice{\icmlEqualContribution} % otherwise use the standard text.

\begin{abstract}
Distributionally Robust Optimisation (DRO) protects risk-averse decision-makers by considering the worst-case risk within an ambiguity set of distributions based on the empirical distribution or a model. To further guard against finite, noisy data, model-based approaches admit Bayesian formulations that propagate uncertainty from the posterior to the decision-making problem. However, when the model is misspecified,
the decision maker must stretch the ambiguity set to contain the data-generating process (DGP), leading to overly conservative decisions. We address this challenge by introducing DRO with Robust, to model-misspecification, Bayesian Ambiguity Sets (\drorobas). These are Maximum Mean Discrepancy ambiguity sets centred at a robust posterior predictive distribution that incorporates beliefs about the DGP. We show that the resulting optimisation problem obtains a dual formulation in the Reproducing Kernel Hilbert Space and we give probabilistic guarantees on the tolerance level of the ambiguity set. Our method outperforms other Bayesian and empirical DRO approaches in out-of-sample performance on the Newsvendor and Portfolio problems with various cases of model misspecification.
\end{abstract}

\section{Introduction}
\label{sec:intro}

Decision-makers frequently encounter the challenge of optimising under uncertainty since the data-generating process (DGP) is not fully known. As a result, they must rely on available data and model families to estimate the DGP within the optimisation objective. However, the data may be noisy, the data distribution might change with time, or the model might be misspecified or poorly fitted, leading to \textit{distributional uncertainty}. Decision-making in this setting is a critical challenge in various applications such as inventory planning \citep{black2022parametric}, portfolio optimisation \citep{li2013portfolio} and distribution shifts in machine learning applications \citep{zhang2020coping}. 

\begin{figure}[t]
\begin{center}
\centerline{\includegraphics[width=0.9\columnwidth]{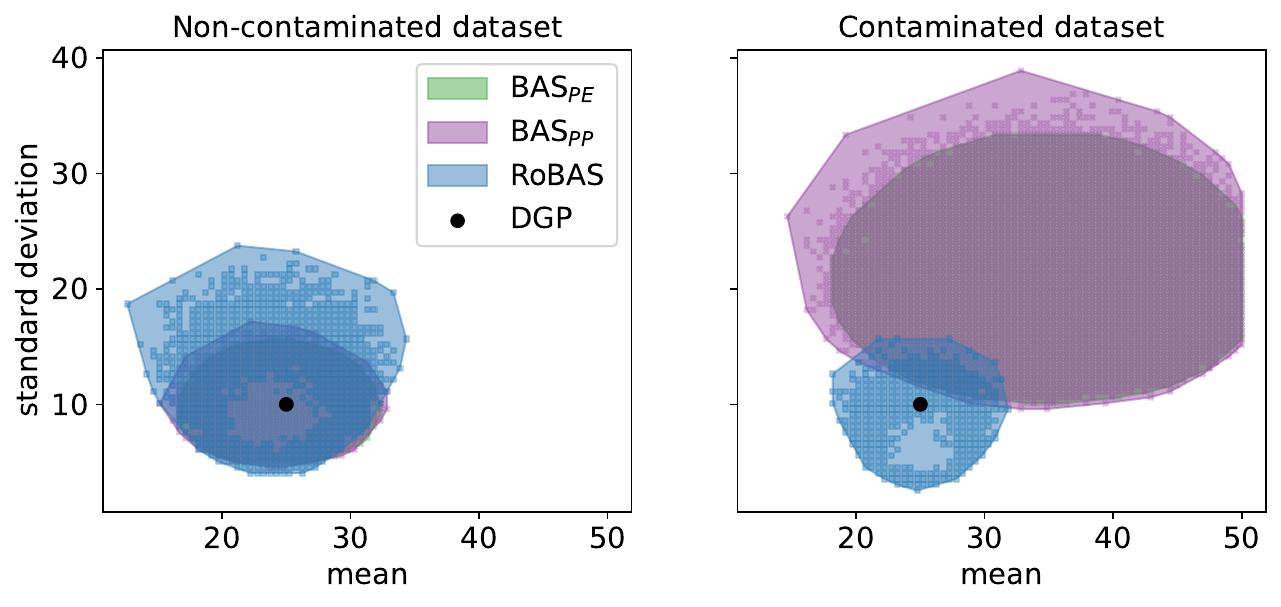}}
\caption{Illustration of (approximated) \baspe, \baspp and \robas (ours) with $\bP_\theta = \cN(\mu, \sigma^2)$ over a grid of $(\mu, \sigma)$ pairs for a fixed $\epsilon$. In the well-specified case (left), all ambiguity sets include the DGP while \robas covers a slightly bigger area than \baspe and \baspp. For a contaminated dataset (right) \robas continues to contain the DGP and maintains a similar area, whereas the \bas formulations exclude it and cover a much larger area of distributions further away from the DGP.}
\label{fig:bas-vs-robas}
\end{center}
\vskip -0.36in
\end{figure}

A risk-averse decision-maker might choose to hedge against distributional uncertainty by finding the decision that minimises the \textit{worst-case} risk over a set of distributions. This worst-case protection is at the heart of \textit{Distributionally Robust Optimisation (DRO)} that defines an ambiguity set of distributions with respect to an estimator of the DGP. This can be fully data-driven, using the empirical measure of the observations \citep[see e.g.][]{Kuhn2019, staib2019distributionally, zhu2021kernel, hu2013kullback}, or model-based when expert knowledge is available to fit a model to available data, resulting in a model-based estimator for the DGP \citep[][]{iyengar2023hedging, michel2021modeling, michel2022distributionally}. Both approaches are sensitive to the choice of DGP estimator and additional estimation error might exist in model-based DRO due to a poor fit or model uncertainty.

To overcome this, recently developed Bayesian formulations of DRO use posterior beliefs to inform the optimisation problem \citep{shapiro2023bayesian} or the ambiguity set itself \citep{dellaporta2024decision}. However, these methods inherit the sensitivity of Bayesian posteriors to model misspecification \citep[see e.g.][]{grunwald2012safe, walker2013bayesian}. 
A key goal in DRO methodology is to choose the size of the ambiguity set such that the DGP falls within it with high probability, as illustrated in Fig. \ref{fig:bas-vs-robas} (left) for the two formulations of Bayesian Ambiguity Sets (\bas) \citep{dellaporta2024decision} and our proposed Robust \bas (\robas). 
If the estimate is not accurate - for example, when the model is misspecified - then a much larger size will be required to contain the DGP as illustrated in Fig. \ref{fig:bas-vs-robas} (right).
The price to pay for this large size is the inclusion of many probability distributions that are unlikely to occur, and which could be very pessimistic with respect to the objective function, leading to an overly conservative decision.
If the decision maker wrongly assumes the model is well-specified and incorrectly chooses an overly optimistic ambiguity set size, then the DGP may not lie in the set, and the decision could be overly optimistic compared to the out-of-sample outcome, often referred to as the optimiser's curse  \citep{Kuhn2019}. 

Uncertainty over the DGP is extensively studied outside stochastic optimisation. Consider a parametric model $\bP_\theta$, indexed by the parameter of interest $\theta$. In the Bayesian framework, uncertainty about the parameter is typically expressed directly through prior beliefs. However, recent work in robust Bayesian inference by \citet{lyddon2018nonparametric} takes a different approach: uncertainty about the parameter is now induced by uncertainty, in the form of prior beliefs, in the DGP. This concept lies at the core of the Bayesian Nonparametric Learning (NPL) framework \citep{lyddon2018nonparametric, fong2019scalable} which relaxes the well-specified model assumption imposed by standard Bayesian inference. 
In this spirit, we approach the challenge of DRO under model misspecification by extending the recently proposed \drobas  
\citep{dellaporta2024decision} framework to tackle model misspecification through a robust NPL posterior coupled with the Maximum Mean Discrepancy (MMD) inside the ambiguity set; thus introducing DRO with Robust Bayesian Ambiguity Sets (\drorobas). While \drobas targets distributional uncertainty with respect to the DGP, the robustness offered is not sufficient under model misspecification, see \fig \ref{fig:bas-vs-robas}.

\section{Background} \label{sec:background}
Let $x \in \cX \subseteq \bR^d$ be a decision-making variable for the cost function $f: \cX \times \Xi \rightarrow \bR$ with data space $\Xi \subseteq \bR^D$. Let $\{\xi_i\}_{i=1}^n \simiid \bP^\star$ be observations from the DGP $\bP^\star \in \cP(\Xi)$, where $\cP(\Xi)$ denotes the space of Borel probability measures on $\Xi$. Furthermore, consider a parametric model family $\cP_\Theta := \{\bP_\theta: \theta \in \Theta\} \subset \cP(\Xi)$ indexed by parameter of interest $\theta \in \Theta \subseteq \bR^s$. We say the model is \emph{misspecified} if $\bP^\star \notin \cP_\Theta$. DRO methods construct an ambiguity set $\cA$, based on an estimator of $\bP^\star$, called the \emph{nominal distribution}, and minimise the worst-case expected cost over $\cA$.

The DRO literature typically categorises ambiguity sets into two classes \citep{rahimian2019distributionally}: {\em moment-based} and {\em discrepancy-based}.
The former contain distributions that satisfy constraints on the moments of $\bP^\star$, without necessarily considering an estimator.
In contrast, discrepancy-based ambiguity sets consist of distributions close to the nominal according to a specified discrepancy measure. Examples include Integral Probability Metrics (IPMs) \citep{husain2022distributionally}, such as the Wasserstein distance \citep{Kuhn2019} and the MMD \citep{staib2019distributionally}, as well as 
$\phi$-divergences like the Kullback-Leibler (KL) divergence \citep{hu2013kullback}.
Regardless of the choice of ambiguity set, the resulting minimax problem can be seen as a game between the decision maker who chooses $x$ and an adversary who chooses the worst-case distribution in $\cA$:
\begin{align} \label{opt:dro}
    \min_{x \in \cX}~{\sup_{\bP \in \cA}~{\bE_{\xi \sim \bP}[f_x(\xi)]}}
\end{align}
where $f_x(\xi) := f(x,\xi)$. Although most discrepancy-based DRO methods are fully empirical, i.e. the estimator is obtained only via $\xi_{1:n}$, sometimes, such as in regression settings, the decision-maker needs to model the variables' relationship via a model family $\cP_\Theta$, describing the DGP. Model-based DRO methods \citep[e.g.][]{iyengar2023hedging, michel2021modeling, michel2022distributionally} use the observations to obtain an estimator $\bP_{\hat{\theta}} \in \cP_\Theta$ and use this as the nominal distribution. Thus, a poorly chosen $\bP_{\hat{\theta}}$ far from $\bP^\star$ (in some distance sense) requires a large $\cA$, leading to overly pessimistic decisions. This has led to Bayesian formulations of DRO which propagate uncertainty about $\theta$ in the optimisation problem.

\subsection{
Bayesian Formulations of DRO}
\citet{shapiro2023bayesian} introduced \textbf{Bayesian DRO (\textcolor{bdro}{\bdro})}  which defines an \textit{expected worst-case} risk objective: 
\begin{align} \label{eq:bdro}
    \min_{x \in \cX}~{\color{bdro}\bE_{ \Pi(\theta \mid \xi_{1:n})}}~{\left[\sup_{\displaystyle\color{bdro}{\bP:  \KL(\bP || \bP_{\theta}) \leq \epsilon}}~{\bE_{\xi \sim \bP}\left[f_x(\xi)\right]}\right]}
\end{align}
where $\Pi(\theta \mid \xi_{1:n})$ denotes the parameter posterior distribution for model family $\cP_\Theta$. However, risk-averse decision-makers are interested in worst-case risk formulations. For this reason, \citet{ dellaporta2024decision} proposed two formulations of the {\textbf{DRO with Bayesian Ambiguity Sets (\drobas)}} that correspond to a \textit{worst-case} optimisation problem with ambiguity sets informed by the standard Bayesian posterior. In particular, they defined \textbf{\textcolor{baspp}{\drobaspp}}:
\begin{align} 
  \min_{x \in \cX}~\sup_{\color{baspp}\displaystyle\bP: \KL\left(\bP || \bE_{\Pi(\theta \mid \xi_{1:n})}[\bP_\theta]\right) \leq \epsilon}~{\bE_{\xi\sim\bP}\left[ f_x(\xi) \right]},
  \label{eq:dro-bas-pp} 
\end{align}
based on a KL-based ambiguity set with nominal distribution the posterior predictive, and \textbf{\textcolor{baspe}{\drobaspe}}:
\begin{align}
  \min_{x \in \cX}~\sup_{\color{baspe}\displaystyle\bP: \bE_{\Pi(\theta \mid \xi_{1:n})}[\KL(\bP || \bP_\theta)] \leq \epsilon}~{\bE_{\xi\sim\bP}\left[ f_x(\xi) \right]}. 
  \label{eq:dro-bas-pe} 
\end{align}
which considers the expected KL under the posterior distribution. 
The authors showcased improved out-of-sample robustness compared to \bdro in a number of exponential family models. Although \drobas in the standard Bayesian setting offers an intuitive, posterior-informed ambiguity set, it can be severely affected by model misspecification. Indeed, \baspe only considers probability measures $\bP$ that are absolutely continuous with respect to $\bP_\theta$ (denoted by $\bP \ll \bP_\theta$) and also admit an expected KL divergence close enough to $\bP_\theta$. 
Since the expectation is informed by the posterior, a non-robust posterior will likely lie far away from the DGP. 
Similarly, \baspp considers probability measures $\bP$ that admit small KL-divergence with respect to $\pred$, where $\pred := \bE_{\theta \sim \Pi(\theta \mid \xi_{1:n})}[\bP_\theta]$ and $\bP \ll \pred$. Hence, the sensitivity of the Bayesian posterior will propagate to the posterior predictive and the resulting ambiguity set. A similar observation was made by \citet{shapiro2023bayesian} for the \bdro method in the misspecified case. 

To remedy this, we exploit the flexibility of the \drobas framework which allows us to choose a different posterior distribution and discrepancy measure, suitable for model misspecification. The notion of targeting a different discrepancy measure, other than the KL divergence, to induce robustness in the Bayesian posterior has been well established in the Bayesian inference literature. The NPL posterior \citep{lyddon2018nonparametric, fong2019scalable} was introduced to, among others, remedy the sensitivity of Bayesian inference to model misspecification by removing the assumption that the model is correct. This is done by setting uncertainty, via nonparametric prior beliefs, directly on the DGP rather than on the parameter of interest. 
Incorporating DGP uncertainty in decision-making has also been considered by \citet{wang2023learning} who explored a nonparametric Dirichlet Process (DP) model for the DGP. Unlike the current paper, this work is not suited for parametric models and considers a weighted objective, with only one counterpart corresponding to a worst-case risk. We focus on decision-making under \emph{parametric} models, which are especially useful for interpretability in decision-making, while also incorporating \emph{nonparametric} prior beliefs about the DGP.

To achieve this, we leverage the work of \citet{dellaporta2022robust} who extended the NPL posterior to discrepancy-based loss functions and showed robustness guarantees when the Maximum Mean Discrepancy (MMD) is used. Using an MMD-based loss allows us to also employ the MMD to construct the ambiguity set. NPL is a natural choice for the \drobas framework as distributional uncertainty in DRO stems directly from uncertainty in the DGP. 

\subsection{Robust NPL Posterior}
In this work, we propose an alternative formulation of \drobas, based on the robust NPL posterior introduced by \citet{lyddon2018nonparametric, fong2019scalable}. We introduce a DP prior~$\bQ \sim \DP(\alpha, \bF)$ on the DGP $\bP^\star$ where $\alpha > 0$ and $\bF \in \cP(\Xi)$. Here, $\bF$ represents our prior beliefs about the DGP and the concentration parameter $\alpha$ dictates the strength of the beliefs with $\alpha = 0$ representing a non-informative prior. To see this, note that given data $\xi_{1:n}$, the posterior is 
\begin{align} \label{eq:dp-posterior}
\begin{split}
    \bQ~|~\xi_{1:n} &\sim \DP(\alpha^\prime, \bF^\prime), \\
    \alpha^\prime := \alpha + n, \quad \bF^\prime &:= \frac{\alpha}{\alpha + n} \bF + \frac{n}{\alpha + n} \bP_n
\end{split}
\end{align}
where $\bP_n := \frac{1}{n}~\sum_{i=1}^n~{\delta_{\xi_i}}$ is the empirical measure and $\delta_{\xi}$ denotes the Dirac measure at $\xi \in \Xi$. For $\alpha = 0$, the DP posterior is centred directly on $\bP_n$. If $\bP^\star$ was known, we could directly compute:
\begin{align} \label{eq:npl-target}
    \theta_L (\bP^{\star}) := \argmin_{\theta \in \Theta} \bE_{\xi \sim \bP^{\star}} [L(\xi; \theta)]
\end{align}
where $L:\Xi \times \Theta \rightarrow \bR$ denotes any loss function. Note that this NPL objective does not assume that the model is well-specified but simply looks for the most likely value of $\theta$ under the expectation of the DGP or, equivalently, the parameter value that best describes the data under the candidate model. Since $\bP^\star$ is unknown and we instead have a nonparametric posterior over it, we can propagate our posterior beliefs to the parameter of interest through the push-forward measure $(\theta_{L})_{\#}(\DP(\alpha', \bF'))$ to give a posterior $\NPL$ on $\Theta$. Sampling from this posterior can be done through the Posterior Bootstrap \citep{fong2019scalable}: For B posterior bootstrap iterations, at iteration~$j \in [B]$:
\begin{enumerate}
    \item Sample $\bQ^{(j)}$ from the posterior $\DP(\alpha^\prime, \bF^\prime)$.
    \item Compute $\theta^{(j)} = \theta_L(\bQ^{(j)})$ where $\theta_L(\cdot)$ as in (\ref{eq:npl-target}). 
\end{enumerate}
\citet{dellaporta2022robust} suggested using a discrepancy-based loss function in (\ref{eq:npl-target}) which we introduce below. 

\subsection{Maximum Mean Discrepancy} \label{sec:mmd}
The MMD belongs to the family of IPMs \citep{muller1997integral}. 
Let $\mathcal{H}_k$ be a Reproducing Kernel Hilbert Space (RKHS), for kernel $k: \Xi \times \Xi \rightarrow \bR$ and norm $\| \cdot \|_{k}$. For $\cP_k(\Xi) := \{\bP \in \cP(\Xi): \int_{\Xi} \sqrt{k(\xi, \xi)} \bP(d\xi) < \infty \}$, the MMD between $\bP, \bQ \in \cP_k(\Xi)$ is defined as:
\begin{align} \label{eq:mmd-sup}
    &\MMD(\bP, \bQ) := \sup_{f \in \mathcal{H}_k, \| f \|_{k} \leq 1}\left| \bE_{\bP}[f(\xi)] - \bE_{\bQ}[f(\xi)] \right|.
\end{align}
\citet{dellaporta2022robust} define the NPL target in \eqref{eq:npl-target} as:
\begin{align} \label{eq:npl-target-mmd}
    \theta_k (\bP^{\star}) := \argmin_{\theta \in \bR^s} \MMD(\bP^{\star}, \bP_{\theta}).
\end{align}
One of the attractive properties of the MMD is that the supremum in (\ref{eq:mmd-sup}) can be obtained in closed form as
\begin{equation}
\begin{split}
    \MMD^2(\bP, \bQ) &= \bE_{\xi, \xi' \sim \bP} [k(\xi, \xi')]  
 - 2 \bE_{\xi \sim \bP, \xi' \sim  \bQ} [k(\xi, \xi')]  \\
&\quad + \bE_{\xi, \xi' \sim \bQ} [k(\xi, \xi')] 
\end{split}
\end{equation}
and can be approximated via sampling \citep[see e.g.][]{gretton2012kernel}.
The resulting NPL posterior with the MMD is called an NPL-MMD posterior.

We explore the gains of using the MMD both in the robust NPL posterior and in the Bayesian ambiguity set. Since the NPL-MMD posterior will target the point in the model family closest (w.r.t. the MMD) to $\bP^\star$, the MMD-based ambiguity set may require a smaller radius to include $\bP^\star$, resulting in less conservative decisions.
The MMD has previously been used as a distance metric in the DRO context by \citet{staib2019distributionally} who considered the ambiguity set $\cB^k_{\epsilon}(\bP_n):= \{\bP \in \cP_k(\Xi): \MMD(\bQ, \bP_n) \leq \epsilon\}$ where $\epsilon >0$ is the radius of the MMD ball, 
$k$ is the kernel and $\bP_n:= \frac{1}{n} \sum_{i=1}^n \delta_{\xi_i}$ is the empirical measure of the observations $\xi_{1:n}$. The same ambiguity set forms a special case of Kernel DRO which was introduced by \citet{zhu2021kernel}. \citet{chen2022distributionally} generalised this to conditional kernel DRO, leveraging conditional distributions and \citet{romao2023distributionally} explored this framework for dynamic programming. 

\section{DRO With Robust Bayesian Ambiguity Sets} \label{sec:dro-bas-misspec}
We propose a robust version of \drobaspp (\ref{eq:dro-bas-pp}) via the MMD and the NPL-MMD posterior predictive defined as:
\begin{align} \label{eq:pp-npl}
    \prednpl := \bE_{\bQ \sim \DP(c^\prime, \bF^\prime)}[\bP_{\theta_k(\bQ)}].
\end{align}
Notice that contrary to the standard Bayesian posterior predictive, $\prednpl$ is defined through marginalisation over the nonparametric posterior over the DGP.
Since the MMD can be approximated only via samples (Section \ref{sec:mmd}), a closed-form density for the predictive in (\ref{eq:pp-npl}) is not required. 
We define the following \textbf{Robust Bayesian Ambiguity Set (\textcolor{robas}{\robas})} with the NPL Posterior Predictive:
\begin{align*} 
    \cB^k_{\epsilon}(\prednpl) := \{\bP \in \cP_k(\Xi): \MMD(\bP, \prednpl) \leq \epsilon\}. 
\end{align*}
Note that $\cB^k_{\epsilon}(\prednpl)$ forms a ball when $k$ is a characteristic kernel, as this makes the MMD a probability metric. This property is desirable as it guarantees that the MMD will be zero if and only if $\bP \equiv \prednpl$.
We hence obtain the following {\textbf{\color{robas} \drorobas}} worst-case risk problem: 
\begin{align}
    \min_{x \in \cX} \sup_{\displaystyle\color{robas}\bP \in \cB_\epsilon^k(\prednpl)} \bE_{\xi \sim \bP}[f_x(\xi)] \label{eq:dro-robas-pp}
\end{align}
Similarly to \drobas, this optimisation problem corresponds to a worst-case risk over a set of probability measures informed by posterior beliefs. Posterior beliefs about $\theta$ are obtained \textit{via posterior beliefs about the DGP} and the map $\theta_k$ in (\ref{eq:npl-target-mmd}). Since the goal of DRO is to target uncertainty about the DGP, the NPL posterior is a natural choice to inform the ambiguity set as it takes into account any prior beliefs about the DGP. Moreover, by targeting the MMD, rather than the KL divergence as in the \bas case, \robas is \emph{not} restricted to probability measures that are absolutely continuous with respect to $\bP_\theta$. 

Intuitively, \robas is expected to be a better-informed ambiguity set than \bas when the model is misspecified since it is informed by a robust posterior predictive and a robust discrepancy measure.
This is better understood through a toy example.
\fig~\ref{fig:outliers} shows a Gaussian location model in the presence of outliers.
The top panel shows the DGP of the training data contaminated with 20\% of outliers along with a pathological model $\bP_{\text{pathological}}$ case with a mean larger than that of the DGP.
In the \baspe case, the expected KL from the DGP to the model is significantly larger than that from the pathological model $\bP_{\text{pathological}}$ due to the sensitivity of the Bayesian posterior to outliers. A similar result holds for the KL divergence between the posterior predictive and the DGP and pathological model in the \baspp case. In contrast, the MMD from the NPL posterior predictive to the DGP is much smaller compared to that of the pathological model. Alternative \drorobas formulations
are provided in Appendix~\ref{sec:alt}.
Proofs of this section are in Appendix~\ref{app:proofs}. 

\begin{figure}[t!]
\vskip 0.2in
    \centering \includegraphics[width=0.75\linewidth]{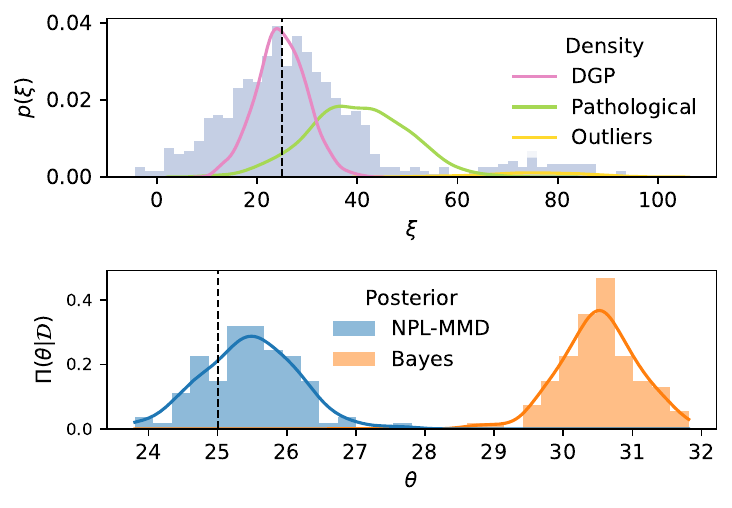}
    \caption{Contaminated Gaussian location example. Top: Histogram of observed data along with the true (DGP), outlier and pathological densities. Bottom: Posterior marginal distributions for NPL-MMD and standard Bayes. The true mean is indicated with a dotted line. For \baspe it holds that $\bE_{\Pi_{\text{Bayes}}}[\KL(\bP_{\text{pathological}}, \bP_\theta)] \approx 0.17<0.42 \approx \bE_{\Pi_{\text{Bayes}}}[\KL(\truedist, \bP_\theta)]$ and similarly for \baspp it holds that $\KL(\bP_{\text{pathological}}, \pred) \approx 0.18 < 0.38 \approx \KL(\truedist, \pred)$. In contrast for \robas we have $\MMD(\bP_{\text{pathological}}, \prednpl) \approx 0.65 > 0.55 \approx \MMD(\truedist, \prednpl)$. This example is inspired by \fig 1 of \citet{gao2023distributionally}.}
    \label{fig:outliers}
    \vskip -0.2in
\end{figure}

\subsection{Duality of the \drorobas Problem} \label{sec:duality}
We first formulate our optimisation problem as a kernel DRO problem \citep{zhu2021kernel}. This allows us to obtain a dual formulation of (\ref{eq:dro-robas-pp}) in the RKHS which can be optimised using kernel methods. Let $\phi: \Xi \rightarrow \cH_k$ denote the feature map associated with kernel $k$ and let $\mu_{\bP} \in \cH_k$ denote the kernel mean embedding of the probability measure $\bP \in \cP_k(\Xi)$, i.e. $\mu_{\bP} := \bE_{\xi \sim \bP}\left[\phi(\xi) \right]$. Then the MMD is equivalently defined as $\MMD(\bP, \bQ) = \| \mu_{\bP} - \mu_{\bQ} \|_k$ \citep[][\lem 4]{gretton2012kernel}. Throughout we assume that $\prednpl \in \cP_k(\Xi)$. Note that this condition is trivially satisfied for a bounded kernel $k$. Consider the following set
% defined over the kernel mean embeddings in the RKHS $\cH_k$, 
satisfying the conditions of \robas: 
\begin{align} \label{eq:C}
    \cC^\star := \{\mu \in \cH_k:  \| \mu - \mu_{\prednpl} \|_{k} \leq \epsilon\}.
\end{align}
The associated ambiguity set induced by $\cC^\star$ is 
\begin{align} \label{eq:Kc}
\begin{split}
    \cK_{\cC^\star} &:= \{\bP \in \cP_k(\Xi), \mu_\bP  \in \cC^\star\} 
    \equiv \cB_\epsilon^k(\prednpl)
\end{split}
\end{align}
and our \drorobas problem in (\ref{eq:dro-robas-pp}) is equivalent to:
\begin{gather} \label{eq:kernel-dro-primal}
\begin{aligned}
    \min_{x \in \cX}~\sup_{\bP, \mu_\bP}~\bE_{\xi \sim \bP}[f_x(\xi)] \hspace{0.2cm}\text{s.t.} \hspace{0.2cm} \bP \in \cP_k(\Xi),\hspace{0.2cm} \mu_\bP \in \cC^\star. 
\end{aligned}
\end{gather}
The equivalence can be seen through the $\cC^\star$-induced ambiguity set for distributions $\bP$ which can be written as $\cK_{\cC^\star}$.

Before we proceed to our dual formulation, we introduce the main result from \citet{zhu2021kernel} which gives a dual formulation of Kernel DRO problems for general sets~$\cC$ satisfying certain assumptions. 
\begin{theorem}[\citet{zhu2021kernel}, \thm 3.1] \label{thm:kernel-dro}
    Assume $\cC \subset \cH_k$ is closed convex, $f_x(\cdot)$ is proper, upper semi-continuous, and $\ri(\cK_{\cC}) \neq \emptyset$, where $\ri(\cK_{\cC})$ denotes the relative interior of $\cK_{\cC}$. Then the primal problem:  
    \begin{align*}
    &\min_{x}~\sup_{\bP, \mu}~\bE_{\xi \sim \bP}[f_x(\xi)] \hspace{0.3cm} \st \hspace{0.2cm} \bP \in \cP,~\mu_\bP = \mu,~\mu \in \cC
    \end{align*}
    is equivalent to:
    \begin{align*}
        \min_{x, g_0 \in \bR, g \in \cH_k} g_0 + \delta_\cC^\star(g) \hspace{0.1cm}
        \st \hspace{0.1cm} f_x(\xi) \leq g_0 + g(\xi), \forall \xi \in \Xi
    \end{align*}
    where $\delta_\cC^\star(g) := \sup_{\mu \in \cC} \left<g, \mu \right>_{\cH_k}$ the support function of $\cC$.
\end{theorem}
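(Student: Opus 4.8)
The plan is to fix the decision $x$, dualise the inner maximisation, and let the outer $\min_x$ pass through unchanged. The inner problem is a generalised moment problem: maximise the linear functional $\bP \mapsto \bE_{\xi \sim \bP}[f_x(\xi)]$ over $\bP \in \cP$ subject to the single abstract constraint $\mu_\bP \in \cC$. First I would absorb this constraint into the objective using the indicator $\delta_\cC$ (equal to $0$ on $\cC$ and $+\infty$ off it), writing the inner value as $\sup_{\bP \in \cP} \{ \bE_{\xi \sim \bP}[f_x(\xi)] - \delta_\cC(\mu_\bP) \}$. Since $\cC$ is closed and convex, $\delta_\cC$ is a closed proper convex function and hence equals its biconjugate, giving $-\delta_\cC(\mu) = \inf_{g \in \cH_k} \{ \delta_\cC^\star(g) - \langle g, \mu \rangle_{\cH_k} \}$. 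Substituting $\mu = \mu_\bP$ and applying the reproducing property $\langle g, \mu_\bP \rangle_{\cH_k} = \bE_{\xi \sim \bP}[g(\xi)]$ recasts the inner value as the saddle expression $\sup_{\bP \in \cP} \inf_{g \in \cH_k} \{ \bE_{\xi \sim \bP}[f_x(\xi) - g(\xi)] + \delta_\cC^\star(g) \}$.

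The crux is to exchange the supremum over $\bP$ with the infimum over $g$. This is exactly where the hypotheses enter: properness and upper semi-continuity of $f_x$ keep the integrand well-behaved, closed convexity of $\cC$ makes $\delta_\cC^\star$ a closed convex support function, and the constraint qualification $\ri(\cK_{\cC}) \neq \emptyset$ furnishes a Slater-type condition guaranteeing strong duality with no gap. Concretely I would invoke Fenchel--Rockafellar duality in a suitable pairing of signed measures with functions (equivalently, Sion's minimax theorem after a compactification of $\Xi$) to obtain the exchanged form $\inf_{g \in \cH_k} \{ \delta_\cC^\star(g) + \sup_{\bP \in \cP} \bE_{\xi \sim \bP}[f_x(\xi) - g(\xi)] \}$.

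With the order swapped, the inner supremum is now over all probability measures of the linear functional $\bP \mapsto \bE_{\xi \sim \bP}[f_x(\xi) - g(\xi)]$, which is attained (in the limit) at a Dirac mass and therefore equals the pointwise supremum $\sup_{\xi \in \Xi} [f_x(\xi) - g(\xi)]$. Finally I would linearise this by introducing the epigraph variable $g_0 = \sup_{\xi \in \Xi}[f_x(\xi) - g(\xi)] \in \bR$, which is equivalent to imposing $f_x(\xi) \leq g_0 + g(\xi)$ for every $\xi \in \Xi$ while minimising $g_0$. Reinstating the outer minimisation over $x$ then yields the stated dual $\min_{x, g_0, g} g_0 + \delta_\cC^\star(g)$ subject to $f_x(\xi) \leq g_0 + g(\xi)$ for all $\xi \in \Xi$.

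I expect the main obstacle to be the rigorous justification of the sup--inf exchange in the infinite-dimensional space of measures. One must pair the space of finite signed measures with an appropriate space of functions, verify that the perturbation function associated with the constraint $\mu_\bP \in \cC$ is convex with the origin in the relative interior of its domain (this is precisely the translation of $\ri(\cK_{\cC}) \neq \emptyset$), and confirm that the dual value is attained, so that strong duality holds with equality rather than as mere weak duality. The reductions in the other steps are routine once this interchange is secured.
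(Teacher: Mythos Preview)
The paper does not prove this theorem: it is quoted verbatim as \thm 3.1 of \citet{zhu2021kernel} and used as a black box. The appendix only verifies, for the particular set $\cC^\star$ arising from \robas, the hypotheses needed to invoke it (closedness, convexity, $\ri(\cK_{\cC^\star}) \neq \emptyset$) and computes the support function $\delta^\star_{\cC^\star}$. So there is no ``paper's own proof'' to compare against.

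That said, your sketch is the standard route and matches how \citet{zhu2021kernel} argue: pass to the indicator of $\cC$, biconjugate, exchange $\sup_{\bP}$ and $\inf_g$ via a Fenchel--Rockafellar argument with the relative-interior condition supplying the constraint qualification, then collapse $\sup_{\bP \in \cP} \bE_\bP[f_x - g]$ to $\sup_{\xi}(f_x - g)$ and introduce the epigraph variable $g_0$. Your diagnosis that the only non-routine step is the rigorous sup--inf interchange in the measure/function pairing is correct, and that is precisely where the hypotheses on $f_x$ and on $\ri(\cK_\cC)$ are consumed.
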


This Theorem gives an effective way to transition from the primal to the dual formulation by using the support function of the set $\cC$. Importantly, in contrast to other kernel-based dual formulations \citep[e.g.][]{staib2019distributionally}, this theorem does \emph{not} require the objective function $f$ to be a member of the RKHS $\cH_k$. 
We first derive the support function of $\cC^\star$ in the \drorobas case. We denote $\bE_{\bQ \sim \DP(\alpha', \bF')}$ and $\bE_{\xi \sim \bP_{\theta_k(\bQ)}}$ by $\bE_{\dpxi}$ and $\bE_{\bP_{\theta_k(\bQ)}}$ respectively. 
\begin{proposition} \label{prop:support-fn}
    Let $\cC^\star$ be defined by (\ref{eq:C}). Then we have $\delta^\star_{\cC^\star}(g) = \bE_{\DP_{\xi_{1:n}}} \left[ \bE_{\bP_{\theta_k(\bQ)}} \left[ g(\xi) \right] \right] + \epsilon \| g \|_k$.
\end{proposition}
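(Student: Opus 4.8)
The plan is to compute the support function directly from its definition $\delta^\star_{\cC^\star}(g) = \sup_{\mu \in \cC^\star}\langle g, \mu\rangle_{\cH_k}$, exploiting that $\cC^\star$ is nothing but the closed ball of radius $\epsilon$ centred at the mean embedding $\mu_{\prednpl}$. First I would reparametrise the feasible set: every $\mu \in \cC^\star$ can be written as $\mu = \mu_{\prednpl} + v$ with $v \in \cH_k$ and $\|v\|_k \leq \epsilon$. Linearity of the inner product then splits the objective into a constant anchor term $\langle g, \mu_{\prednpl}\rangle_{\cH_k}$ and a perturbation term $\langle g, v\rangle_{\cH_k}$, so that $\delta^\star_{\cC^\star}(g) = \langle g, \mu_{\prednpl}\rangle_{\cH_k} + \sup_{\|v\|_k \leq \epsilon}\langle g, v\rangle_{\cH_k}$.

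The perturbation term is handled by Cauchy--Schwarz: $\langle g, v\rangle_{\cH_k} \leq \|g\|_k\|v\|_k \leq \epsilon\|g\|_k$, with equality attained at $v = \epsilon g/\|g\|_k$ (for $g \neq 0$; the case $g = 0$ is trivial since both sides vanish). Note this choice keeps $\mu$ feasible, as $\|v\|_k = \epsilon$. Hence the supremum over $v$ contributes exactly $\epsilon\|g\|_k$, which is the second summand of the claimed expression.

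The remaining work is to identify the anchor term with the iterated expectation $\bE_{\dpxi}[\bE_{\bP_{\theta_k(\bQ)}}[g(\xi)]]$. Here I would invoke the reproducing property $\langle g, \phi(\xi)\rangle_{\cH_k} = g(\xi)$ together with the definition $\mu_{\bP} = \bE_{\xi\sim\bP}[\phi(\xi)]$ of the mean embedding as a Bochner integral in $\cH_k$. Since $\prednpl \in \cP_k(\Xi)$ by assumption, its mean embedding is well-defined, and the bounded linear functional $\langle g, \cdot\rangle_{\cH_k}$ commutes with the Bochner integral, giving $\langle g, \mu_{\prednpl}\rangle_{\cH_k} = \bE_{\xi\sim\prednpl}[g(\xi)]$. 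Substituting the mixture definition $\prednpl = \bE_{\bQ\sim\DP(\alpha', \bF')}[\bP_{\theta_k(\bQ)}]$ from (\ref{eq:pp-npl}) and applying Fubini's theorem to exchange the order of integration over $\bQ$ and over $\xi$ then yields $\bE_{\dpxi}[\bE_{\bP_{\theta_k(\bQ)}}[g(\xi)]]$, completing the identification.

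The main obstacle is the rigorous justification of the two interchanges in this last step: passing the inner product $\langle g, \cdot\rangle_{\cH_k}$ through the Bochner integral defining $\mu_{\prednpl}$, and swapping the expectation over the Dirichlet-process sample $\bQ$ with the expectation over $\xi \sim \bP_{\theta_k(\bQ)}$. Both require an integrability guarantee --- concretely that $\bE_{\xi\sim\prednpl}[\sqrt{k(\xi, \xi)}] < \infty$, which is precisely the condition $\prednpl \in \cP_k(\Xi)$ --- so that $\phi(\xi)$ is Bochner integrable and Fubini applies. As the excerpt notes, for a bounded kernel this holds automatically, so under the standing assumption the exchanges are legitimate and the two summands combine to give the stated formula.
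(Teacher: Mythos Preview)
Your proposal is correct and follows essentially the same route as the paper: the paper likewise splits $\langle g,\mu\rangle_{\cH_k}$ into the anchor term $\langle g,\mu_{\prednpl}\rangle_{\cH_k}$ plus $\langle g,\mu-\mu_{\prednpl}\rangle_{\cH_k}$, bounds the latter by $\epsilon\|g\|_k$ via Cauchy--Schwarz, attains the bound at $\mu'=\mu_{\prednpl}+\epsilon g/\|g\|_k$, and identifies the anchor term with the iterated expectation through the reproducing property and the identity $\mu_{\prednpl}=\bE_{\dpxi}[\mu_{\bP_{\theta_k(\bQ)}}]$. Your treatment is, if anything, slightly more careful in flagging the $g=0$ case and the Bochner/Fubini justifications, which the paper leaves implicit.
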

We can now apply \thm \ref{thm:kernel-dro} to our problem. 
\begin{corollary} \label{cor:bas-dro-mmd-dual}
    Let $\cC^\star$ as in (\ref{eq:C}) and $f_x(\cdot)$ proper, upper semi-continuous. Then problem (\ref{eq:kernel-dro-primal}) is equivalent to:
    \begin{gather}  \label{eq:kernel-dro-dual-mmd}
    \begin{aligned}
        &\min_{x, g_0 \in \bR, g \in \cH_k} && g_0 +  \bE_{\DP_{\xi_{1:n}}} \left[ \bE_{ \bP_{\theta_k(\bQ)}} \left[ g(\xi) \right] \right] + \epsilon \| g \|_{k} \\
        &\text{\normalfont{subject to}} && f_x(\xi) \leq g_0 + g(\xi), \hspace{2em}\forall \xi \in \Xi.
    \end{aligned}
    \end{gather}
\end{corollary}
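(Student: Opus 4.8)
The plan is to obtain the result as a direct application of \thm~\ref{thm:kernel-dro} with the set $\cC = \cC^\star$ and $\cP = \cP_k(\Xi)$, after which the explicit form of the dual objective follows by substituting the support function computed in \prp~\ref{prop:support-fn}. Since the text has already established that the \drorobas problem (\ref{eq:dro-robas-pp}) coincides with the kernel-DRO primal (\ref{eq:kernel-dro-primal}), and the latter matches the primal in \thm~\ref{thm:kernel-dro} verbatim under this identification, the only genuine work is to verify that $\cC^\star$ and $f_x$ satisfy the three hypotheses of the theorem.

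First I would verify that $\cC^\star$ is closed and convex. By definition $\cC^\star = \{\mu \in \cH_k : \|\mu - \mu_{\prednpl}\|_k \leq \epsilon\}$ is a closed ball of radius $\epsilon$ centred at $\mu_{\prednpl}$ in the Hilbert space $\cH_k$. Since we assume $\prednpl \in \cP_k(\Xi)$, the embedding $\mu_{\prednpl}$ is well-defined, and the ball is closed and convex by the triangle inequality together with continuity of $\|\cdot\|_k$. The properness and upper semi-continuity of $f_x(\cdot)$ are assumed in the statement, so these two hypotheses require no further argument.

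The remaining, and least trivial, hypothesis is $\ri(\cK_{\cC^\star}) \neq \emptyset$. The key observation is that the nominal distribution itself witnesses membership: since $\MMD(\prednpl, \prednpl) = \|\mu_{\prednpl} - \mu_{\prednpl}\|_k = 0 \leq \epsilon$, we have $\mu_{\prednpl} \in \cC^\star$ and hence $\prednpl \in \cK_{\cC^\star}$, so the ambiguity set is nonempty. Because $\epsilon > 0$, the embedding $\mu_{\prednpl}$ lies strictly inside $\cC^\star$, i.e. the nominal is strictly feasible ($0 < \epsilon$), which supplies the Slater-type slack needed for the relative interior of $\cK_{\cC^\star}$ to be nonempty. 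I would make this precise by noting that any probability measure whose mean embedding is within distance $\epsilon$ of $\mu_{\prednpl}$ is feasible, and small perturbations of $\prednpl$ keep the embedding inside the ball, so $\prednpl$ lies in $\ri(\cK_{\cC^\star})$.

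With all three hypotheses verified, \thm~\ref{thm:kernel-dro} applies and yields that (\ref{eq:kernel-dro-primal}) is equivalent to $\min_{x, g_0 \in \bR, g \in \cH_k} g_0 + \delta^\star_{\cC^\star}(g)$ subject to $f_x(\xi) \leq g_0 + g(\xi)$ for all $\xi \in \Xi$. Substituting the closed form $\delta^\star_{\cC^\star}(g) = \bE_{\dpxi}\left[\bE_{\bP_{\theta_k(\bQ)}}[g(\xi)]\right] + \epsilon \|g\|_k$ from \prp~\ref{prop:support-fn} then gives exactly (\ref{eq:kernel-dro-dual-mmd}), completing the proof. I expect the relative-interior condition to be the main obstacle, since it is the only place where the structure of $\cP_k(\Xi)$ and the strict positivity of $\epsilon$ must genuinely be used; the other steps are essentially bookkeeping around the general duality theorem.
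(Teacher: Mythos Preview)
Your proposal is correct and follows essentially the same route as the paper: verify the hypotheses of \thm~\ref{thm:kernel-dro} for $\cC = \cC^\star$ (closed, convex, $\ri(\cK_{\cC^\star}) \neq \emptyset$), then substitute the support function from \prp~\ref{prop:support-fn}. The only minor difference is in the relative-interior step: the paper argues that $\cK_{\cC^\star}$ is nonempty (it contains $\prednpl$) and convex (by the triangle inequality for the MMD), hence has nonempty relative interior, whereas you invoke strict feasibility of the nominal at $\epsilon > 0$; both arguments are adequate here and lead to the same conclusion.
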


\paragraph{Computation of (\ref{eq:kernel-dro-dual-mmd}):} \label{para:discretis}
The problem in (\ref{eq:kernel-dro-dual-mmd}) can be solved by the batch approach with discretization of a semi-infinite programme (SIP) \citep{vazquez2008generalized} suggested in \citet{zhu2021kernel}, in addition to a Sample Average Approximation (SAA). 
Let $\{\hat{\xi}_i\}_{i=1}^N$ be samples from the nested expectation in (\ref{eq:kernel-dro-dual-mmd})
and $\{\zeta_j\}_{j=1}^m \subseteq \Xi^m$ be a set of discretisation points. Then the problem can be approximated by: 
\begin{gather}  \label{eq:kernel-dro-dual-mmd-disc}
\begin{aligned}
        &\min_{x, g_0 \in \bR, g \in \cH_k} && g_0 + \frac{1}{N}\sum_{i=1}^N  g(\hat{\xi}_i) + \epsilon \| g \|_{k} \\
        &\quad \quad \text{s.t.} && f_x(\zeta_j) \leq g_0 + g(\zeta_j),\hspace{1em} \forall j \in [m].
\end{aligned}
\end{gather}
We can now apply the distributional robust version of the Representer theorem \citep[][\lem B.1]{zhu2021kernel} which states that it's sufficient to parametrise $g$ by $g(\cdot) = \sum_{i=1}^N \alpha_i k(\hat{\xi}_i, \cdot) + \sum_{j=1}^m \alpha_{N + j} k(\zeta_j, \cdot)$ for some $\alpha_k \in \bR$, for all $k = 1, \ldots, N+m$.

\subsection{Tolerance Level Guarantees} \label{sec:tol}
We start by using the generalisation error results for the NLP-MMD posterior to obtain a result in probability that the DGP lies within our ambiguity set.
First, we give a concentration type bound for $\bE_{\bQ \sim \DP(\alpha', \bF')}[\MMD(\bP^\star, \bQ)]$. In practice, exact sampling from a DP is not possible, so we consider the approximation of the DP suggested in the NPL literature \citep{lyddon2018nonparametric, fong2019scalable, dellaporta2022robust} to sample during the MMD Posterior Bootstrap. In particular, denote by $\dphatxi$ the probability measure on $\cP(\Xi)$ induced by the following sampling process for $(w_{1:n}, \tilde{w}_{1:\tau}) \sim \Dir\left(1, \dots, 1, \frac{\alpha}{\tau}, \dots, \frac{\alpha}{\tau}\right)$ and $\tilde{\xi}_{1:\tau} \simiid \bF$:
\begin{align} \label{eq:approx-dp}
\begin{split}
    \bQ := \sum_{i=1}^n w_i \delta_{\xi_i} + \sum_{k=1}^\tau \tilde{w}_k \delta_{\tilde{\xi}_k} \sim \dphatxi. 
    \end{split}
\end{align}
The associated approximate posterior predictive is
$
    \prednplapprox :=  \bE_{\bQ \sim \dphatxi}[\bP_{\theta_k(\bQ)}].
$
We provide a concentration result for the MMD between
$\bP^\star$ and $\prednplapprox$ 
as the approximated predictive is used in practice.
However, similar results can be derived for the exact case with $\prednpl$. Additionally, all theoretical results regarding the duality of the \drorobas framework from Section \ref{sec:duality} hold exactly the same for the approximated DP as they are proven for a general posterior. 
We make the following assumptions:
\begin{assumption} \label{ass:minimiser}
    For every $\bQ \in \cP(\Xi)$ there exists $c > 0$ such that the set $\{\theta \in \Theta: \MMD(\bQ, \bP_\theta) \leq \inf_{\theta \in \Theta} \MMD( \bQ, \bP_\theta) + c\}$ is bounded.
\end{assumption}
\begin{assumption} \label{ass:bdd-kernel}
    The kernel $k$ is such that $| k(\xi, \xi') | \leq M$, $M < \infty$, for any $\xi, \xi' \in \Xi$.
\end{assumption}
Assumption \ref{ass:minimiser} ensures that a minimiser in (\ref{eq:npl-target-mmd}) exists and is a common assumption made in MMD estimator methods \citep[see][]{briol2019statistical}. Assumption \ref{ass:bdd-kernel} is needed to obtain a concentration inequality for the NPL posterior and it is often made in methods using MMD estimators \citep[see e.g.][]{briol2019statistical, cherief2022finite, dellaporta2022robust, alquier2024universal}. Many commonly used kernels are bounded, such as the Gaussian kernel.
\begin{theorem} \label{thm:gen-bound}
    Suppose Assumptions \ref{ass:minimiser}-\ref{ass:bdd-kernel} hold. Then with probability at least $1 - \delta$:
    \begin{align} \label{eq:concentration-bound}
    \begin{split}
        \MMD(\bP^\star,\prednplapprox) \leq \inf_{\theta \in \Theta} \MMD(\bP_{\theta}, \bP^\star)
        + C_{n,M,\alpha}
    \end{split}
    \end{align}
    where $C_{n,M,\alpha}$ is a constant depending on the number of samples $n$, the upper bound of the kernel $M$ and the concentration parameter on the DP prior $\alpha$. 
\end{theorem}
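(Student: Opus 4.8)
The plan is to reduce the statement about the mixture predictive $\prednplapprox$ to the existing NPL-MMD generalisation theory of \citet{dellaporta2022robust}, which already controls the expected MMD of individual posterior pushforwards, and then to trace the constant. The only genuinely new ingredient is a convexity step that handles the marginalisation defining $\prednplapprox$. Since $\prednplapprox = \bE_{\bQ \sim \dphatxi}[\bP_{\theta_k(\bQ)}]$, linearity of the kernel mean embedding gives $\mu_{\prednplapprox} = \bE_{\dphatxi}[\mu_{\bP_{\theta_k(\bQ)}}]$. Using the embedding identity $\MMD(\bP,\bQ) = \|\mu_\bP - \mu_\bQ\|_k$ and Jensen's inequality for the convex RKHS norm,
\[
\MMD(\bP^\star, \prednplapprox) = \bigl\|\mu_{\bP^\star} - \bE_{\dphatxi}[\mu_{\bP_{\theta_k(\bQ)}}]\bigr\|_k \leq \bE_{\dphatxi}\bigl[\MMD(\bP^\star, \bP_{\theta_k(\bQ)})\bigr].
\]
This reduction is the crux: it passes from the mixture predictive to $\bE_{\dphatxi}[\MMD(\bP^\star, \bP_{\theta_k(\bQ)})]$, the exact object the NPL-MMD bound addresses.

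Next I would decompose the right-hand side using the optimality of $\theta_k(\bQ)$. Fix a draw $\bQ$. Assumption~\ref{ass:minimiser} guarantees the minimisers $\theta_k(\bQ)$ and $\theta^\star \in \argmin_{\theta}\MMD(\bP^\star,\bP_\theta)$ are attained. Two applications of the triangle inequality together with $\MMD(\bQ,\bP_{\theta_k(\bQ)}) = \inf_\theta \MMD(\bQ,\bP_\theta) \leq \MMD(\bQ,\bP_{\theta^\star})$ yield
\[
\MMD(\bP^\star, \bP_{\theta_k(\bQ)}) \leq 2\,\MMD(\bP^\star, \bQ) + \inf_{\theta\in\Theta}\MMD(\bP^\star, \bP_\theta).
\]
Taking $\bE_{\dphatxi}$ leaves the infimum term (the irreducible misspecification error) untouched and reduces the whole problem to a high-probability bound on $\bE_{\dphatxi}[\MMD(\bP^\star, \bQ)]$.

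The main obstacle is this last concentration bound, and it is where Assumption~\ref{ass:bdd-kernel} enters. I would view $F(\xi_{1:n}) := \bE_{\dphatxi}[\MMD(\bP^\star, \bQ)]$ as a function of the i.i.d.\ sample and apply McDiarmid's bounded-differences inequality. Replacing a single $\xi_i$ changes $\mu_\bQ$ only through its atom, $\mu_{\bQ'}-\mu_\bQ = w_i(\phi(\xi_i')-\phi(\xi_i))$, so with $\|\phi(\xi)\|_k \leq \sqrt{M}$ the perturbation of $\MMD(\bP^\star,\bQ)$ is at most $2\sqrt{M}\,w_i$; integrating the Dirichlet weights ($\bE[w_i] = 1/(n+\alpha)$) gives a bounded difference $c_i \leq 2\sqrt{M}/(n+\alpha)$, hence an $O(\sqrt{M\log(1/\delta)/n})$ deviation term. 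The mean $\bE_{\xi_{1:n}}F$ I would control by Jensen (concavity of $\sqrt{\cdot}$), $\bE[\MMD(\bP^\star,\bQ)] \leq \sqrt{\bE[\MMD^2(\bP^\star,\bQ)]}$, and then expand the squared MMD, computing the first and second Dirichlet moments of the weights and the prior-atom expectations under $\bF$; this produces a bound of order $1/\sqrt{n}$ together with a prior-dependent contribution scaling like $\alpha/(n+\alpha)$. Collecting the deviation and mean terms defines $C_{n,M,\alpha}$, and combining with the two preceding displays proves the claim.

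The most delicate bookkeeping is the second-moment computation and the treatment of the prior atoms $\tilde{\xi}_{1:\tau}\simiid\bF$ when making $C_{n,M,\alpha}$ explicit in $n$, $M$ and $\alpha$; everything else is a triangle-inequality and Jensen argument. Equivalently, one may simply invoke the NPL-MMD generalisation bound of \citet{dellaporta2022robust} directly for $\bE_{\dphatxi}[\MMD(\bP^\star,\bP_{\theta_k(\bQ)})]$, in which case the first display (the Jensen reduction for the mixture predictive) is the only additional step required.
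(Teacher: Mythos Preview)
Your proposal is correct and follows essentially the same route as the paper: the Jensen reduction from the mixture predictive to $\bE_{\dphatxi}[\MMD(\bP^\star,\bP_{\theta_k(\bQ)})]$, the triangle/optimality step yielding the factor $2\,\MMD(\bP^\star,\bQ)$, and the McDiarmid argument with bounded differences $2\sqrt{M}/(n+\alpha)$ are exactly the paper's ingredients. The only cosmetic differences are that the paper derives the factor-$2$ inequality via the elementary bound $|\inf_\theta f - \inf_\theta g| \leq \sup_\theta |f-g|$ rather than your two-triangle argument, and it controls the mean $\bE_{\xi_{1:n}}\bE_{\dphatxi}[\MMD(\bP^\star,\bQ)]$ by splitting through the empirical measure $\hat{\bP}_n$ and citing Lemmas 6 and 11 of \citet{dellaporta2022robust}, which is precisely the alternative you mention at the end.
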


\begin{remark}
   $C_{n,M,\alpha}$ has an overall rate of $1/\sqrt{n}$ consistent with existing results for minimum MMD estimators \citep{briol2019statistical, cherief2022finite}. Moreover, given an upper bound of the kernel $M$, the constant is fully known. Hence, if $\inf_{\theta \in \Theta} \MMD(\bP_\theta, \bP^\star)$ can be reasonably approximated, this result can be used to select the radius ensuring \robas includes $\bP^\star$ with high probability. 
\end{remark}
We can now obtain an upper bound for the target optimisation problem for large enough $\epsilon$. 
\begin{corollary} \label{cor:upper-bd}
    Suppose Assumptions \ref{ass:minimiser}-\ref{ass:bdd-kernel} hold and let $C_{n,M,\alpha}$ as in \Cref{thm:gen-bound}. Then, for $\epsilon \geq C_{n,M,\alpha} + \inf_{\theta \in \Theta} \MMD(\bP_{\theta}, \bP^\star)$, with probability at least $1 - \delta$:
    \begin{align*}
        &\bE_{\xi \sim \bP^\star}[f_x(\xi)]  \leq \sup_{\cB_\epsilon^k(\prednplapprox)}~{\bE_{\xi\sim\bP}~\left[ f_x(\xi) \right]}.
    \end{align*}
\end{corollary}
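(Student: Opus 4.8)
The plan is to treat this as a direct consequence of \Cref{thm:gen-bound}: the lower bound imposed on the radius $\epsilon$ is precisely what is needed to force the data-generating process $\bP^\star$ into the ambiguity set $\cB_\epsilon^k(\prednplapprox)$ with high probability, after which the claimed inequality follows because the supremum over the set dominates the integral against any single member of it.

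First I would invoke \Cref{thm:gen-bound} under Assumptions \ref{ass:minimiser}--\ref{ass:bdd-kernel}: on an event $E$ of probability at least $1-\delta$ we have $\MMD(\bP^\star, \prednplapprox) \leq \inf_{\theta \in \Theta} \MMD(\bP_\theta, \bP^\star) + C_{n,M,\alpha}$. Combining this with the hypothesis $\epsilon \geq C_{n,M,\alpha} + \inf_{\theta \in \Theta} \MMD(\bP_\theta, \bP^\star)$ gives, on the same event $E$, the bound $\MMD(\bP^\star, \prednplapprox) \leq \epsilon$. The key is that no new randomness is introduced in this step, so the inclusion derived next holds on the very same event and hence with probability at least $1-\delta$, with no union bound required.

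Second, I would check that $\bP^\star$ is an admissible member of the set, i.e. that $\bP^\star \in \cP_k(\Xi)$ so that membership in the MMD ball is well-posed. This is where Assumption \ref{ass:bdd-kernel} is reused: boundedness $|k(\xi,\xi')| \leq M$ gives $\sqrt{k(\xi,\xi)} \leq \sqrt{M}$, hence $\int_\Xi \sqrt{k(\xi,\xi)}\, \bP^\star(d\xi) \leq \sqrt{M} < \infty$ and therefore $\bP^\star \in \cP_k(\Xi)$. Together with $\MMD(\bP^\star, \prednplapprox) \leq \epsilon$, the definition of $\cB_\epsilon^k(\prednplapprox)$ yields $\bP^\star \in \cB_\epsilon^k(\prednplapprox)$ on $E$.

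Finally, since $\bP^\star$ is then a feasible point of the supremum defining the worst-case risk, its objective value is dominated by that supremum, so on $E$ we have $\bE_{\xi \sim \bP^\star}[f_x(\xi)] \leq \sup_{\bP \in \cB_\epsilon^k(\prednplapprox)} \bE_{\xi \sim \bP}[f_x(\xi)]$, which is exactly the claimed guarantee since $E$ has probability at least $1-\delta$. I do not expect a genuine obstacle, as this is a corollary; the only points needing care are ensuring the deterministic set-inclusion argument is coupled to the high-probability event of \Cref{thm:gen-bound} on the same $E$, and noting that the supremum inequality holds even when the supremum is not attained (or equals $+\infty$), since $\bE_{\xi \sim \bP^\star}[f_x(\xi)]$ is merely one of the values over which the supremum is taken.
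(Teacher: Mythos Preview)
Your proposal is correct and follows exactly the argument the paper intends: the paper does not give a separate proof of this corollary, treating it as immediate from \Cref{thm:gen-bound}, and your write-up spells out precisely that immediate step (on the high-probability event, the radius condition forces $\bP^\star \in \cB_\epsilon^k(\prednplapprox)$, whence the supremum dominates). The extra check that $\bP^\star \in \cP_k(\Xi)$ via Assumption~\ref{ass:bdd-kernel} is a nice touch that the paper leaves implicit.
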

In the special case of Huber's contamination model \citet{huber1992robust}, we can obtain a guarantee similar to \thm \ref{thm:gen-bound} which depends on the contamination level. 

\begin{corollary}[Huber's cont. model] \label{cor:hub-upper-bd}
     Suppose $\bP^\star = (1 - \eta) \bP_{\theta_0} + \eta \bQ$ for some $\theta_0 \in \Theta$, $\bQ \in \cP(\Xi)$ and $\eta \in [0,1]$. Suppose Assumptions \ref{ass:minimiser}-\ref{ass:bdd-kernel} hold and let $C_{n,M,\alpha}$ as in \Cref{thm:gen-bound}. Then with probability at least $1 - \delta$:
$ 
% \label{eq:concentration-bound-huber}
        \MMD(\bP_{\theta_0}, \prednplapprox)   \leq
        4 \eta + 2 C_{n,M,\alpha}.
$
\end{corollary}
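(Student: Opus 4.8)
The plan is to prove the bound by the triangle inequality for the MMD together with the linearity of the kernel mean embedding under mixtures, and then to invoke \Cref{thm:gen-bound} for the only term that is genuinely random. Recall that, by the embedding representation used throughout the paper, $\MMD(\bP,\bR)=\|\mu_{\bP}-\mu_{\bR}\|_k$, so the MMD satisfies the triangle inequality for any positive-definite kernel. I would therefore start from
\[ \MMD(\bP_{\theta_0},\prednplapprox) \le \MMD(\bP_{\theta_0},\bP^\star) + \MMD(\bP^\star,\prednplapprox), \]
which isolates a purely deterministic contamination term and a concentration term.

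For the contamination term I would exploit that the embedding is linear: since $\bP^\star=(1-\eta)\bP_{\theta_0}+\eta\bQ$ we have $\mu_{\bP^\star}=(1-\eta)\mu_{\bP_{\theta_0}}+\eta\mu_{\bQ}$, hence $\mu_{\bP^\star}-\mu_{\bP_{\theta_0}}=\eta(\mu_{\bQ}-\mu_{\bP_{\theta_0}})$ and therefore $\MMD(\bP_{\theta_0},\bP^\star)=\eta\,\MMD(\bQ,\bP_{\theta_0})$. The remaining factor is controlled by the boundedness of the kernel (\Cref{ass:bdd-kernel}): since $\|\mu_{\bP}\|_k^2=\bE_{\xi,\xi'\sim\bP}[k(\xi,\xi')]\le M$ for every $\bP$, the embedding lives in a ball of radius $\sqrt{M}$, so $\MMD(\bQ,\bP_{\theta_0})\le 2\sqrt{M}$. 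Thus $\MMD(\bP_{\theta_0},\bP^\star)\le 2\eta\sqrt{M}$, which is $\le 2\eta$ for a normalised kernel ($M\le 1$).

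For the concentration term I would apply \Cref{thm:gen-bound}, giving, on an event of probability at least $1-\delta$, $\MMD(\bP^\star,\prednplapprox)\le \inf_{\theta\in\Theta}\MMD(\bP_\theta,\bP^\star)+C_{n,M,\alpha}$. The infimum is again deterministic and is bounded by the plug-in choice $\theta=\theta_0$, i.e. $\inf_{\theta}\MMD(\bP_\theta,\bP^\star)\le \MMD(\bP_{\theta_0},\bP^\star)\le 2\eta\sqrt{M}$. Substituting both estimates into the triangle inequality and collecting the $O(\eta)$ contributions (the contamination term appears once directly and once through the infimum inside \Cref{thm:gen-bound}) together with the $C_{n,M,\alpha}$ terms yields, after constant bookkeeping, the claimed $\MMD(\bP_{\theta_0},\prednplapprox)\le 4\eta+2C_{n,M,\alpha}$. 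Crucially, the two $\eta$-bounds hold deterministically, so the only stochasticity is the event from \Cref{thm:gen-bound} and the $1-\delta$ probability is preserved.

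The main subtlety is not any single inequality but the interplay between the deterministic and probabilistic pieces: I must ensure the contamination bounds hold pointwise so that they may be attached to the concentration event without shrinking its probability, that the infimum over $\Theta$ is legitimately upper-bounded by evaluating at the contamination centre $\theta_0$, and that the diameter bound $\MMD\le 2\sqrt{M}$ from \Cref{ass:bdd-kernel} is used consistently. The remaining work---matching the exact constants $4$ and $2$, including whether the approximation $\prednpl\to\prednplapprox$ contributes a second $C_{n,M,\alpha}$---is routine; the conceptual content lies entirely in the triangle inequality and the mixture linearity of the mean embedding.
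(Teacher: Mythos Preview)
Your approach is correct and in fact more direct than the paper's. You treat \Cref{thm:gen-bound} as a black box: one triangle inequality splits off a deterministic contamination piece $\MMD(\bP_{\theta_0},\bP^\star)\le 2\eta$, and then \Cref{thm:gen-bound} controls $\MMD(\bP^\star,\prednplapprox)$, with the residual $\inf_\theta\MMD(\bP_\theta,\bP^\star)$ again bounded by $2\eta$ via the plug-in $\theta=\theta_0$. The paper instead re-enters the proof of \Cref{thm:gen-bound}: it applies Jensen to pass to $\bE_{\dphatxi}[\MMD(\bP_{\theta_0},\bP_{\theta_k(\bQ)})]$, reuses the internal inequality $\MMD(\cdot,\bP_{\theta_k(\bQ)})\le 2\MMD(\cdot,\bQ)+\inf_\theta\MMD(\bP_\theta,\cdot)$ with $\bP_{\theta_0}$ in place of $\bP^\star$ (so the infimum vanishes), then swaps $\bP_{\theta_0}$ for $\bP^\star$ inside the expectation via the same $2\eta$ contamination bound, and finally invokes the concentration lemma underlying \Cref{thm:gen-bound}. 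Both routes hinge on the same two facts---the mixture linearity giving $\MMD(\bP_{\theta_0},\bP^\star)\le 2\eta$ and the concentration of the DP posterior---but yours is more modular.

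Two small clean-ups. First, your own arithmetic gives $4\eta+C_{n,M,\alpha}$, which already implies the stated $4\eta+2C_{n,M,\alpha}$; there is no ``second $C_{n,M,\alpha}$'' coming from the approximation $\prednpl\to\prednplapprox$, since \Cref{thm:gen-bound} is already stated for $\prednplapprox$. Drop that hedge. Second, the step $\MMD(\bQ,\bP_{\theta_0})\le 2$ requires $M\le 1$, as you note; the paper obtains the same $2\eta$ by citing Lemma~3.3 of \citet{cherief2022finite}, which is formulated for a kernel bounded by one, so the normalisation assumption is present in both arguments---you have simply made it explicit.
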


\section{Experiments} \label{sec:experiments}
We evaluate our method on several different DGPs, model families and misspecification settings for two decision-making problems: the Newsvendor and the Portfolio. 

We compare our method to existing Bayesian formulations of DRO —\drobas \citep{dellaporta2024decision} and Bayesian DRO (\bdro) \citep{shapiro2023bayesian}—both based on the KL divergence and standard Bayesian posterior. To assess how much robustness in our framework is gained through the robust posterior compared to the choice of the MMD in the ambiguity set, we further compare against the empirical method which uses an MMD ball around the empirical measure (denoted by Empirical MMD). This was presented in \citet{staib2019distributionally} and also forms a special case of Kernel DRO \citep{zhu2021kernel}. Implementation details are provided in Appendix \ref{app:exps}.

We explore two types of misspecification.
\emph{1. Model Misspecification} which occurs when the DGP $\bP^\star$ does not belong to the model family $\cP_\Theta$, e.g. if $\bP^\star$ is multimodal while $\cP_\Theta$ assumes unimodality. This affects Bayesian DRO methods (\bdro, \drobas, \drorobas) but not empirical approaches, as the latter do not rely on a model.
\emph{2. Huber's contamination model} \citep{huber1992robust} which is a specific type of model misspecification (see Fig. \ref{fig:outliers}) wherein the training DGP is $\bP^\star = (1-\eta) \tilde{\bP} + \eta \bQ$ for some $\eta \in [0,1]$ and $\tilde{\bP}, \bQ \in \cP(\Xi)$. Contamination, limited to the training set, impacts both Bayesian and empirical DRO methods since the test distribution is assumed to be $\tilde{\bP}$. Huber contamination relates to concepts like distribution shift and out-of-distribution robustness \citep[e.g.,][]{liu2021towards}.
\subsection{The Newsvendor Problem}

\begin{figure}[t]
\vskip 0.2in
\begin{center}
\centerline{\includegraphics[width=0.9\linewidth]{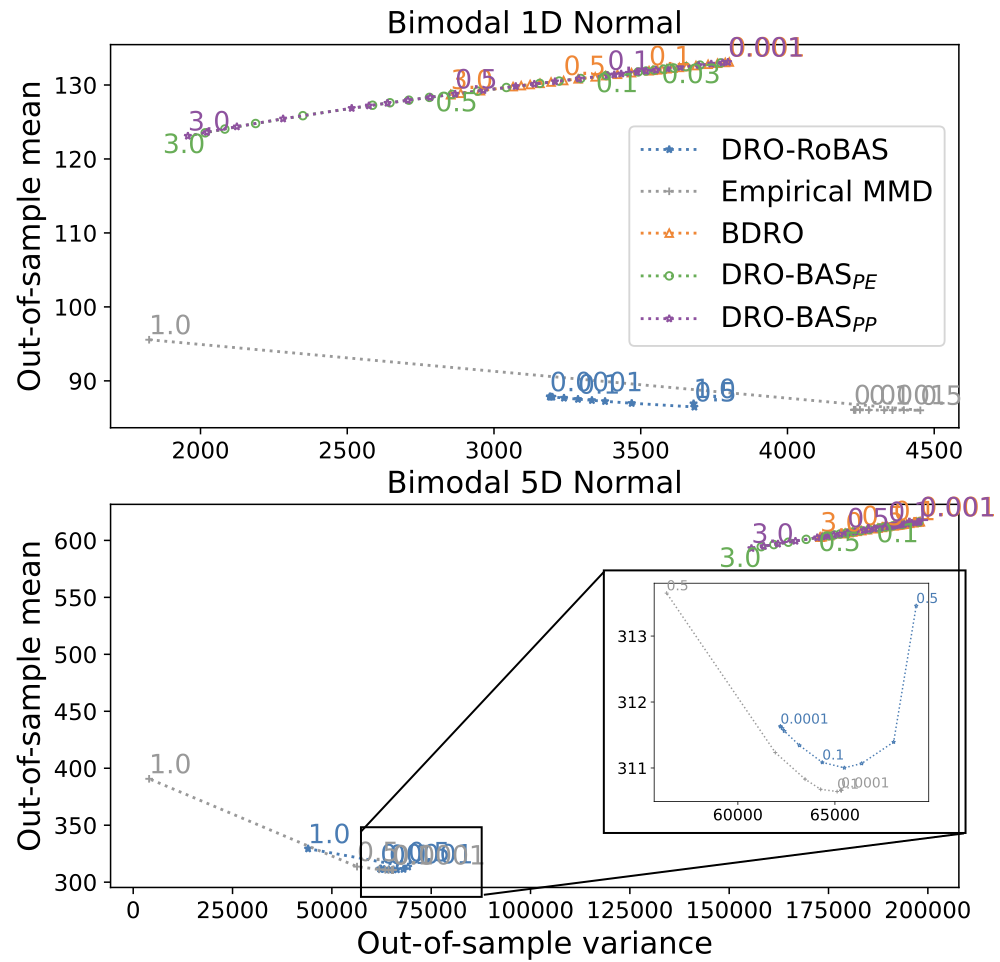}}
\caption{The out-of-sample mean and variance for the Newsvendor problem with a \emph{misspecified} Gaussian location model and a bimodal Gaussian DGP. Results are shown for the univariate ($D=1$, top) and the multivariate ($D=5$, bottom) cases, with markers representing $\epsilon$ values. For illustration purposes, the bottom-left area of the multivariate case is shown in a zoomed-in view.}
\label{fig:bimodal-robas}
\end{center}
\vskip -0.2in
\end{figure}

\begin{figure*}[t]
\vskip 0.2in
\begin{center}
\centerline{\includegraphics[width=0.9\linewidth]{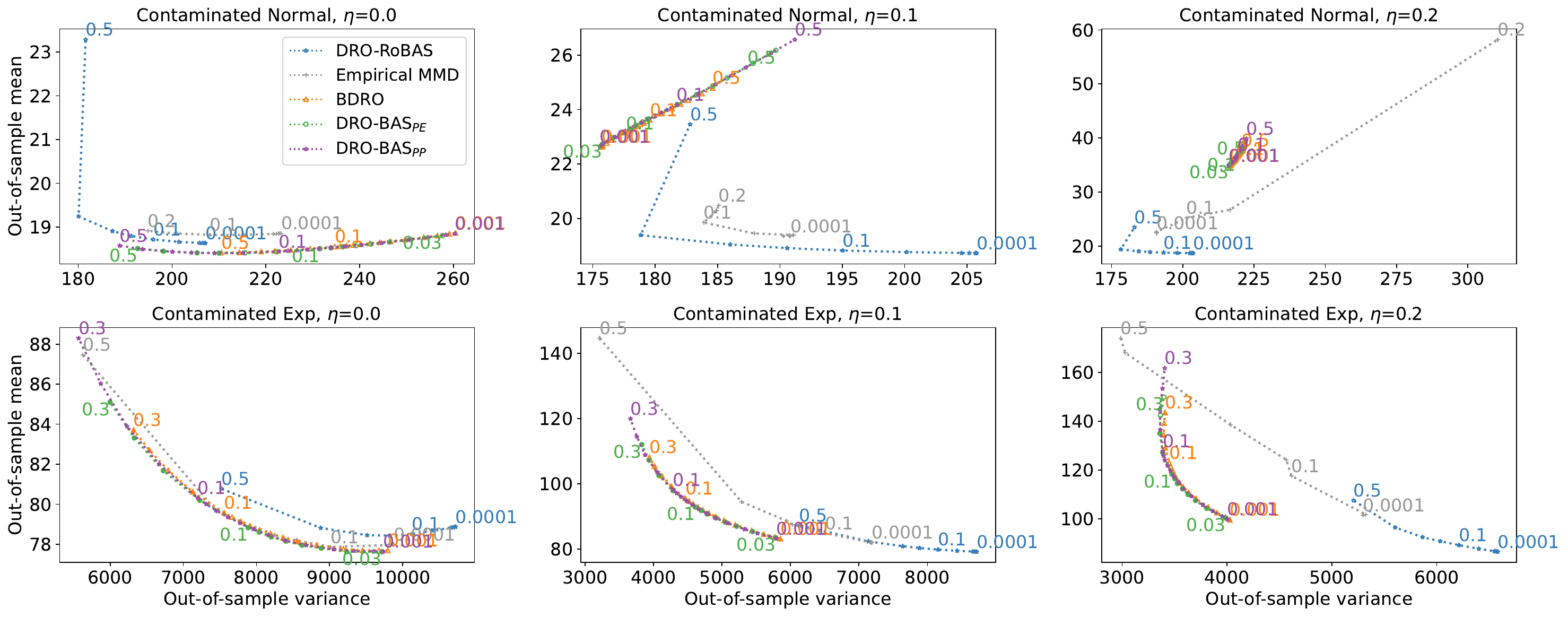}}
\caption{Out-of-sample mean-variance trade-off in the Newsvendor problem for a Gaussian location model (top) and an Exponential model (bottom) with a contaminated training dataset. Results are shown for contamination levels $\eta = 0.0$ (left), $\eta = 0.1$ (middle), and $\eta = 0.2$ (right). Each marker represents a specific $\epsilon$ value, with some labelled for reference.}
\label{fig:cont-normal-robas}
\end{center}
\vskip -0.2in
\end{figure*}

We start with the commonly explored Newsvendor problem \citep[e.g.][]{porteus1990stochastic}. The goal is to choose the optimal amount of products to buy based on consumers' demand. The cost is defined as:
$
f(x,\xi) := h \max(x - \xi, 0) + b \max(\xi-x, 0) 
$
where $x \in \bR^D_{\geq 0}$ is the number of product units ordered, $\xi \in \bR^D$ is the consumers' demand, $0 \in \bR^D$ is the zero vector and $b$ and $h$ denote the backorder and holding cost per unit respectively. In all examples we follow the implementation of \citet{shapiro2023bayesian, dellaporta2024decision} and set $b = 8$ and $h = 3$. We run each experiment $J=100$ times for $n = 20$ observations and compute the out-of-sample mean and variance of the cost incurred.

We consider two models and several DGP cases as follows. 
    First, we assume the demand $\xi \in \Xi$ follows a Gaussian distribution with known variance, i.e. $\bP_\theta := \cN(\theta, \sigma^2 \cI_{D \times D})$ while the DGP is a bimodal Gaussian distribution (case 1 above): 
    $\truedist:= 0.5 \cN(\theta^\star_1, \sigma^2 \cI_{D \times D}) + 0.5 \cN(\theta^\star_2, \sigma^2 \cI_{D \times D})$ for $D=1,5$. 
    Furthermore, we explore the same Gaussian model 
    with a contaminated Gaussian training DGP (case 2 above):
    $\truedist_{\text{train}}:= (1 -\eta) \cN(\theta^\star, \sigma^2 ) + \eta \cN(\theta^\prime, \sigma^2)$ 
    and an Exponential model $\bP_\theta := \text{Exp}(\theta)$ with a contaminated Exponential DGP for the training DGP (case 2 above): 
     $\bP^\star_{\text{train}} := (1 - \eta) \text{Exp}(\theta^\star) + \eta \cN(\mu, \sigma)$ 
    for $\eta \in \{0.0, 0.1, 0.2\}$.

\fig \ref{fig:bimodal-robas} presents the out-of-sample mean and variance of the methods for the bimodal univariate and multivariate Gaussian DGPs. 
The effect of model misspecification is notably more pronounced for the \drobas and \bdro instantiations, which are based on the standard Bayesian posterior and the KL divergence. 
In the \drorobas case, this robustness is likely due to the fact that the obtained NPL-MMD posterior is bimodal, even though the model itself is unimodal. 
In contrast, the standard Bayesian posterior is highly sensitive to misspecification, resulting in a unimodal posterior concentrated between the two modes. Consequently, \drobas and \bdro require very large values of $\epsilon$ to capture the true DGP, leading to conservative decisions that incur high out-of-sample costs. This is evident as they achieve lower mean-variance as $\epsilon$ increases.

We further observe (\fig \ref{fig:bimodal-robas}) that in the univariate case, empirical MMD achieves a lower (difference of $\leq 1$) out-of-sample mean for most values of $\epsilon < 1$.
However, \drorobas consistently shows a lower out-of-sample variance. In the multivariate case, the performances of the two methods are similar, though empirical MMD outperforms \drorobas in both mean and variance. Notably, this comparison pits a Bayesian method under model misspecification against a completely empirical method unaffected by this misspecification.
However, it is promising that \drorobas remains highly competitive against this baseline. The next example, based on contamination models, illustrates a scenario where robustness is crucial for both model-based and empirical methods.

\begin{figure*}[!ht]
    \centering
    \includegraphics[width=0.9\linewidth]{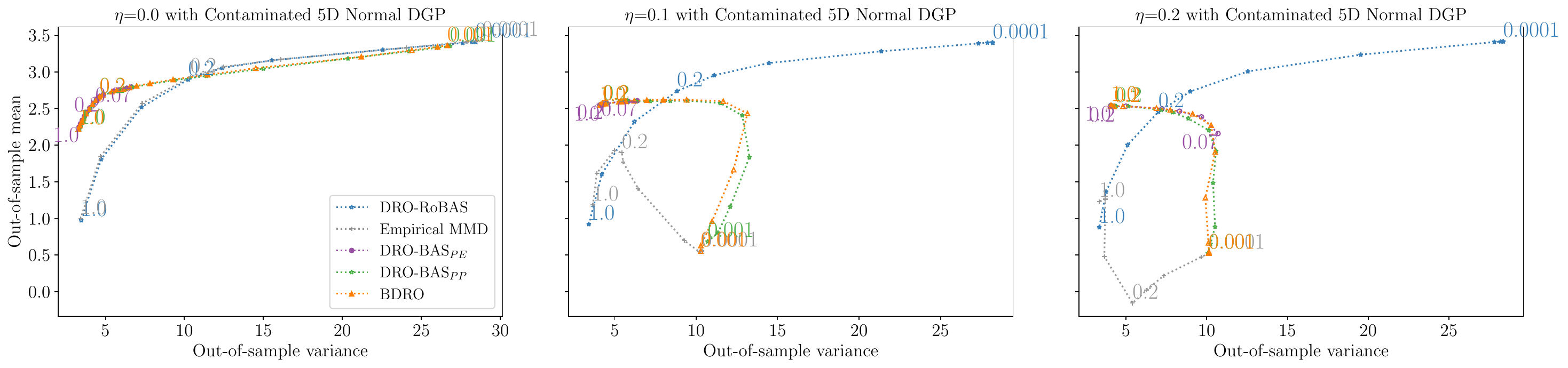}
    \caption{Out-of-sample mean-variance trade-off in the Portfolio problem for a 5D contaminated Gaussian DGP 
    with~$\eta=0.0,0.1,0.2$. Note that the goal is to maximise returns, so {\em larger} out-of-sample mean is better.}
    \label{fig:synthetic-portfolio}
\end{figure*}

In the second simulation, we consider the Huber contamination models \citep{huber1992robust} where the training set is contaminated whereas the test set is not. \fig \ref{fig:cont-normal-robas} demonstrates that both \drobas formulations outperform \drorobas and the empirical MMD method in the well-specified case, where there is no contamination in the training set, and the training and test distributions are identical. In misspecified cases, where $\eta > 0$, \drorobas shows greater robustness compared to the other methods in terms of the out-of-sample mean-variance trade-off. This suggests that the robust posterior and robust distance measures in \robas contribute to a better-informed ambiguity set concerning the \textit{test set generating process}. 
This example further illustrates that, while the motivation for a robust ambiguity set stemmed from concerns about \emph{model misspecification}, even entirely empirical methods, like the empirical MMD, can be sensitive to misspecifications arising from discrepancies between the training and test distributions.

\subsection{The Portfolio Optimisation Problem}

We continue with the multi-dimensional Portfolio problem, also considered by \citet{shapiro2023bayesian}, which chooses stock weightings ($x \in \bR^D$) to maximise returns. The objective function is $f_x(\xi) = - \xi^\top x$ which corresponds to maximising the return and the optimisation is subject to the constraints $x_i \geq 0$ for all $i = 1,\ldots, D$ and $\sum_{i=1}^D x_i = 1$. We generate $n=100$ observations from a 5$D$ Gaussian DGP with contamination on three-out-of-five dimensions: $\bP^\star = (1 - \eta) \cN(\mu^\star, \Sigma^\star) + \eta \cN(\mu', \Sigma^\star)$.
We use a multivariate Gaussian model with unknown mean and variance.

Fig.~\ref{fig:synthetic-portfolio} shows \drorobas is unaffected by the contamination, whilst empirical MMD, \drobas, and \bdro are negatively affected.
Consider~$\eta=0.1$: for $\epsilon < 0.2$, empirical DRO performance quickly degrades compared to \drorobas; but, for $\epsilon \geq 0.2$, empirical MMD performs similarly  due to the MMD robustness.
This effect is magnified for~$\eta=0.2$, demonstrating that the empirical nominal distribution is unreliable due to outliers, whilst \drorobas benefits from a robust nominal - the NPL-MMD posterior predictive - thus performs better for small~$\epsilon$. 

\subsection{Computational Time} \label{sec:comp-time}
The increased robustness of \drorobas comes at the cost of higher computational demands (see 
Appendix \ref{app:exps}). This cost arises from the complex optimisation problem in the RKHS and the longer sampling time required for the NPL posterior. However, as previously demonstrated, this cost is justified by significantly improved out-of-sample performance across various cases of model misspecification. Moreover, by leveraging the NPL-MMD and the MMD, \drorobas can be used for any choice of model family, even likelihood-free models. Notice that \drobaspe is limited to exponential family models whereas the computational cost of \bdro and \drobaspp increases considerably if the posterior is not available in closed form, as methods like Markov Chain Monte Carlo are needed for posterior sampling. This highlights the flexibility and robustness of \drorobas despite its computational demands.

\section{Conclusion} \label{sec:discussion}

Bayesian formulations of DRO for decision-making problems can suffer from model misspecification as the ambiguity set heavily relies on the non-robust Bayesian posterior. We addressed this challenge by using a robust NPL posterior to inform the ambiguity set and leveraging the MMD to construct \textit{both} the posterior and the ambiguity set itself. We show that \drorobas admits a dual formulation in the RKHS and we provide probabilistic guarantees for the tolerance level such that the resulting optimisation problem upper bounds the true objective with high probability. 

Scalability improvements for \drorobas can be achieved through existing tools from kernel methods such as Fourier features \citep{rahimi2007random} and low-rank kernel matrix approximations \citep{bach2013sharp} that are left for future work. Our empirical evidence suggests that if the model is well-specified, or the level of misspecification is low, then existing Bayesian formulations like \drobas can have better performance and scalability. At the same time, when model misspecification is moderate or high then \drorobas achieves significantly improved out-of-sample performance and robustness. Note that any prior knowledge on the misspecification level can be naturally incorporated into our framework to further boost performance.

Finally, the construction of robust Bayesian Ambiguity Sets can extend beyond the choices of NPL and MMD. For example, another instantiation of our framework arises if we employ Generalised Bayesian Inference (GBI) \citep{bissiri2016general}. One of the motivations of GBI is to induce robustness with respect to model misspecification \citep{ghosh2016robust, jewson2018principles, cherief2020mmd,  knoblauch2022optimization, matsubara2022robust, altamirano2023robusta} by targeting a different divergence than the KL. While these methods do not directly impose uncertainty on the DGP like NPL does, they can produce robust GBI posteriors, making it worthwhile to integrate into \drorobas. Notably, the duality results in \Cref{sec:duality} are based on the MMD choice, however, they hold under a general posterior and are \textit{not} dependent on the NPL framework.

\section*{Acknowledgements}
CD acknowledges support from EPSRC grant
[EP/T51794X/1] as part of the Warwick CDT in
Mathematics and Statistics and EPSRC grant [EP/Y022300/1]. PO and TD acknowledge support from a UKRI Turing AI acceleration Fellowship [EP/V02678X/1] and a Turing Impact Award
from the Alan Turing Institute. For the purpose of open access, the authors have applied a Creative Commons Attribution (CC-BY) license to any Author Accepted Manuscript version arising from this submission.

\bibliography{refs}
\bibliographystyle{icml2025}

%%%%%%%%%%%%%%%%%%%%%%%%%%%%%%%%%%%%%%%%%%%%%%%%%%%%%%%%%%%%%%%%%%%%%%%%%%%%%%%
%%%%%%%%%%%%%%%%%%%%%%%%%%%%%%%%%%%%%%%%%%%%%%%%%%%%%%%%%%%%%%%%%%%%%%%%%%%%%%%
% APPENDIX
%%%%%%%%%%%%%%%%%%%%%%%%%%%%%%%%%%%%%%%%%%%%%%%%%%%%%%%%%%%%%%%%%%%%%%%%%%%%%%%
%%%%%%%%%%%%%%%%%%%%%%%%%%%%%%%%%%%%%%%%%%%%%%%%%%%%%%%%%%%%%%%%%%%%%%%%%%%%%%%
\newpage
\appendix
\onecolumn
\begin{center}
    {\LARGE \textbf{Supplementary Material}}
\end{center} 
The Supplementary Material is structured as follows. Section \ref{app:proofs} contains the proofs of the theoretical results discussed in \Cref{sec:dro-bas-misspec}, while Section \ref{app:exps} provides further experimental details. Finally, Section \ref{sec:alt} discusses alternative formulations of \drorobas.

\section{Proofs of Theoretical Results}
\label{app:proofs}
In this section, we provide detailed proofs of the theoretical results concerning the duality of the \drorobas problem and the tolerance level guarantees.

\subsection{Proof of \texorpdfstring{\Cref{prop:support-fn}}{proposition}}
First, we derive the support function of the effective ambiguity set in terms of kernel mean embeddings defined in Sec. \ref{sec:duality}: 
\begin{equation} \label{eq:app-robas:c-star}
\begin{split}
\cC^\star &:= 
\{\mu \in \cH_k:  \| \mu - \mu_{\prednpl} \|_{k} \leq \epsilon\} 
\end{split}
\end{equation}
where we denote by $\bE_{\dpxi}[\cdot]$ the expectation under $\bE_{\bQ \sim \DP(c', \bF')}[\cdot]$ and we have defined the NPL-MMD posterior predictive as $\prednpl := \bE_{\DP_{\xi_{1:n}}}[\bP_{\theta_k(\bQ)}]$.  We start with the following \lem. 
\begin{lemma} \label{lem:sup-cauchy}
    For any $\epsilon > 0$ we have:
    \begin{align*}
    \sup_{\| \mu - \mu_{\prednpl} \|_{k} \leq \epsilon} \left< g, \mu - \mu_{\prednpl}\right>_{\cH_k} = \epsilon \| g \|_{k}.
    \end{align*}
\end{lemma}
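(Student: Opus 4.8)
The plan is to reduce this to the elementary fact that a bounded linear functional $v \mapsto \langle g, v\rangle_{\cH_k}$ attains its maximum over a centred Hilbert-space ball at the point aligned with $g$, by Cauchy--Schwarz. The statement is essentially a geometric identity about balls in $\cH_k$, and the centring at $\mu_{\prednpl}$ is exactly what makes the right-hand side independent of $\mu_{\prednpl}$.

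First I would perform the change of variables $v := \mu - \mu_{\prednpl}$. As $\mu$ ranges over the closed ball $\{\mu \in \cH_k : \|\mu - \mu_{\prednpl}\|_k \le \epsilon\}$, the translated variable $v$ ranges exactly over the centred ball $\{v \in \cH_k : \|v\|_k \le \epsilon\}$, so the quantity of interest equals $\sup_{\|v\|_k \le \epsilon}\langle g, v\rangle_{\cH_k}$. For the upper bound, Cauchy--Schwarz gives $\langle g, v\rangle_{\cH_k} \le \|g\|_k\,\|v\|_k \le \epsilon\,\|g\|_k$ for every feasible $v$, so the supremum is at most $\epsilon\,\|g\|_k$.

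For the matching lower bound, and to confirm the supremum is in fact attained, I would exhibit a feasible maximiser: when $g \ne 0$, set $v^\star := \epsilon\, g / \|g\|_k$, which satisfies $\|v^\star\|_k = \epsilon$ and $\langle g, v^\star\rangle_{\cH_k} = \epsilon\,\|g\|_k$; when $g = 0$ both sides are $0$. Combining the two bounds yields the claimed equality.

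I do not anticipate any real obstacle here, since the result is a routine consequence of Cauchy--Schwarz. The only points requiring care are the trivial verification that the translation $v = \mu - \mu_{\prednpl}$ carries the ball centred at $\mu_{\prednpl}$ onto the ball centred at the origin (so that the feasible set for $v$ is genuinely the centred $\epsilon$-ball), and handling the degenerate case $g = 0$ separately so that the normalisation $g/\|g\|_k$ defining the maximiser is well defined.
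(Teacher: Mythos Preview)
Your proposal is correct and follows essentially the same approach as the paper: the paper also uses Cauchy--Schwarz for the upper bound and then constructs the explicit maximiser $\mu' := \mu_{\prednpl} + \epsilon\, g/\|g\|_k$ (your $v^\star$ after translation) to match it. Your change-of-variables presentation is slightly cleaner, and you additionally handle the degenerate case $g=0$, which the paper omits.
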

\begin{proof}
    The proof follows the same logic as the proof for the ambiguity set corresponding to the MMD-ball around the empirical measure, provided in Appendix A.2 of \citet{zhu2021kernel}. To prove the equality statement we will prove both sides of the inequality. 
    We first prove that the left-hand side is less than or equal to the right-hand side. Applying the Cauchy-Swarz inequality we obtain:
    \begin{equation}
    \begin{split}
        \sup_{\| \mu - \mu_{\prednpl} \|_k \leq \epsilon} \left< g, \mu - \mu_{\prednpl}\right>_{\cH_k} 
        \leq \sup_{\| \mu - \mu_{\prednpl} \|_k \leq \epsilon}  \|g\|_k \| \mu - \mu_{\prednpl}\|_k 
         = \epsilon \| g \|_k.
    \end{split}
    \end{equation}
    For the opposite direction, let $\mu' := \mu_{\prednpl} + \epsilon \frac{g}{\|g\|_k}$. Then 
    \begin{align*}
        &\| \mu'- \mu_{\prednpl} \|^2_{\cH_k} \\
        &\quad = \left< \mu'- \mu_{\prednpl}, \mu'- \mu_{\prednpl}  \right>_{\cH_k} \\
        & \quad = \left< \mu', \mu' \right>_{\cH_k}  - 2 \left< \mu', \mu_{\prednpl} \right>_{\cH_k} +  \left< \mu_{\prednpl}, \mu_{\prednpl}  \right>_{\cH_k} \\
        &  \quad = \left< \mu_{\prednpl} + \epsilon \frac{g}{\|g\|_k}, \mu_{\prednpl} + \epsilon \frac{g}{\|g\|_k}  \right>_{\cH_k} +
       \left< \mu_{\prednpl}, \mu_{\prednpl}  \right>_{\cH_k} \\
        & \quad \quad - 2 \left< \mu_{\prednpl} + \epsilon \frac{g}{\|g\|_k}, \mu_{\prednpl} \right>_{\cH_k} \\
        &  \quad= \left<\epsilon \frac{g}{\|g\|_k}, \epsilon \frac{g}{\|g\|_k} \right>_{\cH_k} \\
        &  \quad = \epsilon^2 \frac{\left< g, g \right>_{\cH_k}}{\|g\|_k^2} \\
        &  \quad = \epsilon^2
    \end{align*}
    and hence
    \begin{align*}
        \| \mu^\prime - \mu_{\prednpl} \|_k = \epsilon
    \end{align*}
    which proves that $\mu' \in \cC^\star$. To complete the proof it suffices to show that 
    \begin{align}
        \left<g, \mu' - \mu_{\prednpl} \right> = \epsilon \|g\|_k.
    \end{align}
    By definition of $\mu'$ we have:
    \begin{align*}
        \left<g, \mu' - \mu_{\prednpl} \right>_{\cH_k} &= \left<g, \mu_{\prednpl} + \epsilon \frac{g}{\|g\|_k} - \mu_{\prednpl} \right>\\
        &= \left<g, \epsilon \frac{g}{\|g\|_k} \right> \\
        &= \epsilon \|g\|_k.
    \end{align*}
Since $\mu' \in \cC^\star$ we have 
\begin{align*}
    \sup_{\| \mu - \mu_{\prednpl} \|_k] \leq \epsilon} \left< g, \mu - \mu_{\prednpl}\right>_{\cH_k} \geq \left<g, \mu' - \mu_{\prednpl} \right>_{\cH_k}  = \epsilon \|g\|_k
\end{align*}
which completes the proof. 
\end{proof}

We are now ready to prove \Cref{prop:support-fn}.
\begin{proof}[Proof of Proposition \ref{prop:support-fn}]
    The result follows from the definition of the support function by applying \lem \ref{lem:sup-cauchy} and the reproducing property as follows: 
    \begin{align*}
        \delta^\star_{\cC^\star}(g) &= \sup_{\mu \in \cC^\star} \left< g, \mu\right>_{\cH_k} \\
        &= \left<g, \mu_{\prednpl} \right>_{\cH_k} + \sup_{\mu \in \cC^\star} \left< g, \mu\right>_{\cH_k} - \left<g, \mu_{\prednpl} \right>_{\cH_k} \\
        &= \left<g, \mu_{\prednpl} \right>_{\cH_k}  + \sup_{\mu \in \cC^\star} \left< g, \mu - \mu_{\prednpl}\right>_{\cH_k} \\
        &\overset{(1)}{=}  \bE_{\dpxi} \left[ \bE_{\bP_{\thetamapk(\bQ)}} \left[ g(x) \right] \right] + 
         \sup_{\| \mu - \mu_{\prednpl} \|_k \leq \epsilon} \left< g, \mu - \mu_{\prednpl}\right>_{\cH_k} \\
        &\overset{(2)}{=} \bE_{\dpxi} \left[ \bE_{\bP_{\thetamapk(\bQ)}} \left[ g(x) \right] \right]+ \epsilon \| g \|_k.
    \end{align*}
    Equality (1) follows from the reproducing property and the definition of a kernel mean embedding by noting that: 
\begin{align}
    \mu_{\prednpl} := \bE_{\xi \sim \prednpl}[\phi(\xi)] = \bE_{Q \sim \dpxi}[\bE_{\xi \sim \bP_{\thetamapk(\bQ)}}[\phi(\xi)]] = \bE_{\dpxi}[\mu_{\bP_{\thetamapk(\bQ)}}]
\end{align}
where $\phi$ denotes the feature map associated with kernel $k$. Equality (2) follows from \lem \ref{lem:sup-cauchy}.
\end{proof}
\subsection{Proof of Corollary \ref{cor:bas-dro-mmd-dual}}
To prove the Corollary it suffices to verify the assumptions of Theorem \ref{thm:kernel-dro} for our set $\cC^\star$. 

\begin{lemma}
   $\cC^\star$, as defined in (\ref{eq:app-robas:c-star}), is closed and convex. Furthermore, ambiguity set $\cK_{\cC^\star}$ satisfies $\ri(\cK_{\cC^\star}) \neq \emptyset$, where
   \begin{align} \label{eq:app:Kc}
\begin{split}
    \cK_{\cC^\star} := \{\bP \in \cP_k(\Xi), \mu_\bP \in \cC^\star\} = \{ \bP: \MMD(\bP, \prednpl) \leq \epsilon \}.
\end{split}
\end{align}
\end{lemma}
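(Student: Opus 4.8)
The plan is to treat the three claims separately, handling the closedness and convexity of $\cC^\star$ by elementary Hilbert-space geometry, and then establishing the relative-interior condition by exhibiting the nominal measure $\prednpl$ as a strictly feasible (Slater) point.

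First I would observe that $\cC^\star$ is nothing but the closed ball of radius $\epsilon$ centred at $\mu_{\prednpl}$ in $\cH_k$. Closedness follows because $\mu \mapsto \|\mu - \mu_{\prednpl}\|_k$ is continuous and $\cC^\star$ is the preimage of the closed interval $[0,\epsilon]$. Convexity follows from the triangle inequality: for $\mu_1, \mu_2 \in \cC^\star$ and $\lambda \in [0,1]$ we have $\|\lambda \mu_1 + (1-\lambda)\mu_2 - \mu_{\prednpl}\|_k \leq \lambda \|\mu_1 - \mu_{\prednpl}\|_k + (1-\lambda)\|\mu_2 - \mu_{\prednpl}\|_k \leq \epsilon$. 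Both are one-line arguments.

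Next I would record the identity $\cK_{\cC^\star} = \{\bP \in \cP_k(\Xi): \MMD(\bP, \prednpl) \leq \epsilon\}$, which is immediate from the embedding characterisation $\MMD(\bP, \prednpl) = \|\mu_\bP - \mu_{\prednpl}\|_k$ together with the definition of $\cC^\star$: indeed $\mu_\bP \in \cC^\star$ iff $\|\mu_\bP - \mu_{\prednpl}\|_k \leq \epsilon$ iff $\MMD(\bP, \prednpl) \leq \epsilon$. This rewrites $\cK_{\cC^\star}$ as a sublevel set of the map $\bP \mapsto \MMD(\bP, \prednpl)$, which is convex because $\bP \mapsto \mu_\bP$ is affine and the norm is convex. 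For the relative-interior claim, the central idea is that $\prednpl$ is strictly feasible: since $\prednpl \in \cP_k(\Xi)$ by assumption and $\MMD(\prednpl, \prednpl) = 0 < \epsilon$, the nominal measure lies strictly inside the ball. I would then argue that any probability measure whose embedding is close enough to $\mu_{\prednpl}$ remains feasible, so that $\{\bP : \MMD(\bP, \prednpl) < \epsilon\}$ is a relatively open subset of $\cK_{\cC^\star}$ containing $\prednpl$; hence $\prednpl \in \ri(\cK_{\cC^\star})$ and the relative interior is non-empty. This mirrors the argument of \citet{zhu2021kernel} for the MMD ball around the empirical measure, with $\prednpl$ playing the role of $\bP_n$.

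The main obstacle is making the relative-interior step fully rigorous in the (possibly infinite-dimensional) space of probability measures, where the relative interior must be taken with respect to the affine hull of $\cK_{\cC^\star}$ rather than the ambient space. I would need to check that moving from $\prednpl$ a small amount along any feasible direction keeps the resulting signed measure a genuine probability measure lying in the ball, which is precisely the subtlety the strict inequality $0 < \epsilon$ is designed to absorb. Working within the ambient structure and topology inherited from Theorem~\ref{thm:kernel-dro}, the strict feasibility of $\prednpl$ supplies the required Slater point and closes the argument.
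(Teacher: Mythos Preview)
Your proposal is correct and covers all three claims; the convexity argument is identical to the paper's, but the other two parts differ in presentation.

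For closedness, the paper argues directly that the complement $\cH_k \setminus \cC^\star$ is open by choosing, for each $y$ outside the ball, the radius $r = \|y - \mu_{\prednpl}\|_k - \epsilon$ and applying the triangle inequality. Your preimage-of-$[0,\epsilon]$-under-a-continuous-map argument is equivalent and arguably cleaner; both are elementary.

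For $\ri(\cK_{\cC^\star}) \neq \emptyset$, the paper takes a shorter route: it simply notes that $\cK_{\cC^\star}$ is non-empty (since $\prednpl \in \cK_{\cC^\star}$) and convex (since the MMD satisfies the triangle inequality), and invokes the general principle that a non-empty convex set has non-empty relative interior. You instead exhibit $\prednpl$ as a strictly feasible Slater point, arguing that $\MMD(\prednpl, \prednpl) = 0 < \epsilon$ places it in the relative interior. Your approach is more explicit about the infinite-dimensional subtlety you correctly flag---the ``non-empty convex $\Rightarrow$ non-empty relative interior'' principle is a theorem in finite dimensions but can fail in infinite-dimensional spaces without further structure---so your strict-feasibility argument is in fact the more careful of the two, at the cost of a few extra lines.
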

\begin{proof}
    \textit{Convexity:} Convexity follows trivially by the triangle inequality. Let $\mu_1, \mu_2 \in \cC^\star$ and $\lambda \in [0,1]$. Then 
    \begin{align*}
        \left\| \lambda \mu_1 + (1-\lambda) \mu_2 - \mu_{\prednpl} \right\|_k 
        & \leq  \lambda \left\| \mu_1 - \mu_{\prednpl} \right\|_k + (1-\lambda) \left\| \mu_2 - \mu_{\prednpl} \right\|_k \\
        & \leq \lambda \epsilon + (1-\lambda) \epsilon = \epsilon
    \end{align*}
    where the first inequality follows from the triangle inequality applied to the RKHS norm. Hence, $\lambda \mu_1 + (1- \lambda) \mu_2 \in \cC^\star$ and $\cC^\star$ is convex. 

    \textit{Closeness of $\cC^\star$:} By definition, a closed set is a set whose complement is open, so it suffices to show the set~$\cH_k \backslash \cC^\star $ is open.
    Again by definition, set~$\cH_k \backslash \cC^\star $ is open if, for all $y \in \cH_k \backslash \cC^\star $, there exists $r > 0$ such that any point~$z \in \cH_k$ satisfying~$\Vert y - z \Vert_{\cH_k} < r$ also belongs to $\cH_k \backslash \cC^\star $.
    Given any~$y \in \cH_k \backslash \cC^\star $, let~$r =  \Vert y - \mu_{\prednpl} \Vert_{\cH_k}  - \epsilon$.
    Observe that $r > 0$ because $\Vert y - \mu_{\prednpl} \Vert_{\cH_k}  > \epsilon$.
    Now, for all~$z \in \cH_k$ such that~$\Vert y - z \Vert_{\cH_k} < r$, we can apply the triangle inequality to obtain
    $$\Vert y - \mu_{\prednpl} \Vert_{\cH_k}  \leq  \Vert z - \mu_{\prednpl} \Vert_{\cH_k} + \Vert z - y \Vert_{\cH_k} .$$
    After rearranging, we have
    \begin{align*}
        \Vert z - \mu_{\prednpl} \Vert_{\cH_k}  &\geq \Vert y - \mu_{\prednpl} \Vert_{\cH_k} - \Vert z - y \Vert_{\cH_k} \\
        &> \Vert y - \mu_{\prednpl} \Vert_{\cH_k}  - r = \epsilon.
    \end{align*}
    Therefore, $ \Vert z - \mu_{\prednpl} \Vert_{\cH_k}  > \epsilon$, so $z$ lies in set $\cH_k \backslash \cC^\star $, which concludes our claim that $\cH_k \backslash \cC^\star $ is open and proves that $\cC^\star$ is closed. 

    \textit{Non-empty relative interior of $\cK_{\cC^\star}$:} It suffices to prove that $\cK_{\cC^\star}$ is non-empty and convex. Convexity follows from an analogous proof of convexity of $\cC^\star$ since MMD satisfies the triangle inequality.  $\cK_{\cC^\star}$ is non-empty since, for any $\epsilon > 0$, $\prednpl \in \cK_{\cC^\star}$. 
\end{proof}

\subsection{Proof of Theorem \ref{thm:gen-bound}.}
We first provide a Lemma adapted from the results of \citet{dellaporta2022robust} which bounds the expected MMD (under a finite DGP sample) between the DGP and a sample from the approximate DP posterior. This is necessary to quantify the generalisation error between the DGP and the obtained model $\bP_{\thetamapk(\bQ)}$ for a DP sample $\bQ$. 
\begin{lemma} \label{lemma:from-npl-mmd}
     Let $\bQ \sim \dphatxi$ be defined as in (\ref{eq:approx-dp}). Assume that $\xi_{1:n}\simiid \bP^\star$ and that the kernel $k$ is such that $| k(\xi,\xi') | \leq K$, $K < \infty$, for any $\xi, \xi' \in \Xi$. Then
    \begin{equation}
        \bE_{\xi_{1:n}\simiid \bP^\star} \left[\bE_{\dphatxi} \left[\MMD(\bP^\star, \bQ)\right]\right] 
        \leq \sqrt{\frac{K}{n}} + \sqrt{\frac{2K(n-1) + c(c + 1)}{(c + n)(c + n + 1)}} + \sqrt{\frac{Kc(c + 1)}{(c + n)(c + n + 1)}}.
    \end{equation}
\end{lemma}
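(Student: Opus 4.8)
The plan is to work entirely with kernel mean embeddings, writing $\MMD(\bP^\star,\bQ)=\|\mu_{\bP^\star}-\mu_{\bQ}\|_k$ (Lemma 4 of \citet{gretton2012kernel}), where for a sample $\bQ=\sum_{i=1}^n w_i\delta_{\xi_i}+\sum_{k=1}^\tau \tilde{w}_k\delta_{\tilde{\xi}_k}$ from $\dphatxi$ the embedding is $\mu_{\bQ}=\sum_{i=1}^n w_i\phi(\xi_i)+\sum_{k=1}^\tau \tilde{w}_k\phi(\tilde{\xi}_k)$. Since the weights sum to one, I would insert the empirical mean embedding $\mu_{\bP_n}=\tfrac1n\sum_i\phi(\xi_i)$ and decompose the embedding error into three pieces, $\mu_{\bP^\star}-\mu_{\bQ}=(\mu_{\bP^\star}-\mu_{\bP_n})-\sum_{i=1}^n(w_i-\tfrac1n)\phi(\xi_i)-\sum_{k=1}^\tau\tilde{w}_k\phi(\tilde{\xi}_k)$, which I label (A), (B), (C). Applying the triangle inequality in $\cH_k$ and then Jensen's inequality $\bE\|\cdot\|_k\le\sqrt{\bE\|\cdot\|_k^2}$ to the full expectation $\bE_{\xi_{1:n}}\bE_{\dphatxi}$ reduces the claim to bounding the three expected squared norms, and the three summands of the stated bound will arise from (A), (B), (C) respectively.

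For (A), which depends only on the data, I would use that $\mu_{\bP_n}$ is an unbiased estimator of $\mu_{\bP^\star}$, so $\bE_{\xi_{1:n}}\|\mu_{\bP^\star}-\mu_{\bP_n}\|_k^2=\tfrac1n(\bE_\xi[k(\xi,\xi)]-\|\mu_{\bP^\star}\|_k^2)\le K/n$ by the boundedness assumption on $k$, giving the first term $\sqrt{K/n}$. For (C), the pseudo-samples $\tilde{\xi}_k\simiid\bF$ are independent of the Dirichlet weights, so expanding $\bE\|\sum_k\tilde{w}_k\phi(\tilde{\xi}_k)\|_k^2=\sum_{k,l}\bE[\tilde{w}_k\tilde{w}_l]\,\bE[k(\tilde{\xi}_k,\tilde{\xi}_l)]$ and substituting the Dirichlet second moments $\bE[\tilde{w}_k^2]$ and $\bE[\tilde{w}_k\tilde{w}_l]$ (total concentration $c+n$), bounding every kernel evaluation by $K$, and summing the $\tau$ diagonal and $\tau(\tau-1)$ off-diagonal contributions, the $\tau$-dependence cancels and leaves exactly $Kc(c+1)/((c+n)(c+n+1))$, i.e. the third term.

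The crux is term (B), the data-reweighting fluctuation $\bE\|\sum_i(w_i-\tfrac1n)\phi(\xi_i)\|_k^2=\sum_{i,j}\bE[(w_i-\tfrac1n)(w_j-\tfrac1n)]\,k(\xi_i,\xi_j)$. A naive bound $|k|\le K$ on every entry is far too lossy here: it produces an $O(1)$ quantity rather than the required $O(1/n)$ rate, because it discards the negative off-diagonal correlations of the Dirichlet weights. Instead I would insert the exact moments $\bE[(w_i-\tfrac1n)^2]$ and $\bE[(w_i-\tfrac1n)(w_j-\tfrac1n)]$, collect the diagonal and off-diagonal contributions, and then use the identity $\sum_{i,j}k(\xi_i,\xi_j)=n^2\|\mu_{\bP_n}\|_k^2\ge 0$ together with $\|\mu_{\bP_n}\|_k^2\le K$ to control the off-diagonal sum by its sign; this is precisely what rescues the $1/n$ scaling. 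Careful bookkeeping of these Dirichlet moments is the main obstacle, and it yields the second term.

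Finally, because the bounds for (B) and (C) hold uniformly in the data (every kernel value is bounded by $K$), the outer expectation $\bE_{\xi_{1:n}}$ passes through trivially, so summing the three square-root bounds gives the stated inequality. The resulting overall $1/\sqrt n$ rate is then consistent with the minimum-MMD-estimator guarantees referenced in the subsequent remark.
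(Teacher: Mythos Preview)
Your strategy is sound and, at the outer level, coincides with the paper's: both start from the triangle inequality at the empirical measure, $\MMD(\bP^\star,\bQ)\le\MMD(\bP^\star,\hat{\bP}_n)+\MMD(\hat{\bP}_n,\bQ)$. The paper's proof then stops there, simply invoking Lemma~6 and Lemma~11 of \citet{dellaporta2022robust} for the two respective pieces, whereas you propose to re-derive those bounds from scratch. Your term~(A) is exactly the standard empirical-embedding bound (their Lemma~6), and your further (B)$+$(C) split with explicit Dirichlet second-moment calculations amounts to re-proving their Lemma~11. So the two routes agree structurally; you supply the moment computations rather than cite them, which buys a self-contained argument at the cost of the paper's two-line brevity. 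Your (C) computation indeed collapses to $Kc(c+1)/((c+n)(c+n+1))$ exactly, since $\sum_{k,l}\bE[\tilde w_k\tilde w_l]=\bE[(\sum_k\tilde w_k)^2]=c(c+1)/((c+n)(c+n+1))$.

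One bookkeeping caveat: your term~(B) will not reproduce the stated middle summand $\tfrac{2K(n-1)+c(c+1)}{(c+n)(c+n+1)}$ verbatim. For instance at $c=0$ your recipe (diagonal~$\le K$, off-diagonal controlled via $n^2\|\mu_{\bP_n}\|_k^2\ge 0$) gives $K/(n+1)$ rather than $2K(n-1)/(n(n+1))$. You still land on a constant of the same $O(1/n)$ order, so the overall $1/\sqrt n$ rate is unaffected, but the precise numerator in the middle term depends on the particular bounding choices made inside the cited Lemma~11, which your (B)--(C) decomposition does not replicate exactly. This is a cosmetic discrepancy in constants, not a gap in the argument.
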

\begin{proof}
    The result follows directly from Lemmas 6 and 11 of \citet{dellaporta2022robust} with some small modifications. First notice that by the triangle inequality 
    \begin{align} \label{eq:triangle-pstar-q}
        \MMD(\bP^\star, \bQ) \leq \MMD(\bP^\star, \hat{\bP}_n) + \MMD(\hat{\bP}_n, \bQ)
    \end{align}
    where $\hat{\bP}_n := \frac{1}{n} \sum_{i=1}^n \delta_{\xi_{1:n}}$. By \citet{dellaporta2022robust}, \lem 6, we have that 
    \begin{align} \label{eq:bound-pstar-pn}
        \bE_{\xi_{1:n} \simiid \bP^\star}\left[ \MMD (\bP^\star, \hat{\bP}_n) \right] \leq \sqrt{\frac{K}{n}}. 
    \end{align}
    Moreover, by \citet{dellaporta2022robust}, \lem 11, we have that:
    \begin{align} \label{eq:bound-pn-q}
        \bE_{\xi_{1:n} \simiid \bP^\star} \left[\bE_{\bQ \sim \dphatxi} \left[\MMD(\hat{\bP}_n, \bQ) \right] \right] &\leq \sqrt{\frac{2K(n-1) + c(c + 1)}{(c + n)(c + n + 1)}} + \sqrt{\frac{Kc(c + 1)}{(c + n)(c + n + 1)}}.
    \end{align}
    Taking expectations and using equations (\ref{eq:bound-pstar-pn}) and (\ref{eq:bound-pn-q}) in (\ref{eq:triangle-pstar-q}) we obtain the required result:
    \begin{equation} \begin{split} 
        \bE_{\xi_{1:n} \simiid \bP^\star} \left[\bE_{\bQ \sim \dphatxi} \left[\MMD(\bP^\star, \bQ)\right]\right] 
        & \leq \bE_{\xi_{1:n} \simiid \bP^\star} \left[\MMD(\bP^\star, \hat{\bP}_n)\right] + \bE_{\xi_{1:n} \simiid \bP^\star} \left[\bE_{\bQ \sim \dphatxi} \left[\MMD(\hat{\bP}_n, \bQ) \right] \right] \\
        & \leq \sqrt{\frac{K}{n}} + \sqrt{\frac{2K(n-1) + c(c + 1)}{(c + n)(c + n + 1)}} + \sqrt{\frac{Kc(c + 1)}{(c + n)(c + n + 1)}}.
    \end{split} \end{equation}
\end{proof}
Based on this bound, we can now provide a result in probability. 
\begin{lemma} \label{lemma:conc-pstar-q}
Assume that $\xi_{1:n} \simiid \bP^\star$ and that kernel $k$ is such that $| k(\xi, \xi') | \leq K$, $K < \infty$, for any $\xi, \xi' \in \Xi$. Then with probability $1 - \delta$ we have:
\begin{equation} \begin{split}  
    \bE_{\bQ \sim \dphatxi}[\MMD(\bP^\star, \bQ)] &\leq
        \sqrt{\frac{K}{n}} + \sqrt{\frac{2K(n-1) + c(c + 1)}{(c + n)(c + n + 1)}} \\
        & \quad + \sqrt{\frac{Kc(c + 1)}{(c + n)(c + n + 1)}} + \frac{\sqrt{2nK \left(\log{\frac{1}{\delta}} \right)}}{c + n}.
\end{split} \end{equation}
\end{lemma}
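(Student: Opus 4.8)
The plan is to treat $h(\xi_{1:n}) := \bE_{\bQ \sim \dphatxi}[\MMD(\bP^\star, \bQ)]$ as a deterministic function of the i.i.d.\ sample $\xi_{1:n} \simiid \bP^\star$ and to upgrade the in-expectation bound of \lem~\ref{lemma:from-npl-mmd} into a high-probability statement via McDiarmid's bounded differences inequality. Since \lem~\ref{lemma:from-npl-mmd} already controls $\bE_{\xi_{1:n}}[h(\xi_{1:n})]$ by exactly the first three terms of the claimed right-hand side, it only remains to show that $h$ concentrates above its mean by at most the stated deviation $\tfrac{\sqrt{2nK\log(1/\delta)}}{c+n}$.

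First I would establish the bounded-differences property of $h$. Fix an index $i$ and replace $\xi_i$ by an arbitrary $\xi_i'$, keeping all other data points fixed. Crucially, in the approximate posterior (\ref{eq:approx-dp}) the Dirichlet weights $(w_{1:n},\tilde w_{1:\tau}) \sim \Dir(1,\dots,1,\tfrac{\alpha}{\tau},\dots,\tfrac{\alpha}{\tau})$ and the prior atoms $\tilde\xi_{1:\tau} \simiid \bF$ are independent of the data, so for a fixed realisation of these the two draws $\bQ, \bQ'$ differ only in a single atom location, giving $\mu_\bQ - \mu_{\bQ'} = w_i(\phi(\xi_i) - \phi(\xi_i'))$ in $\cH_k$. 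Writing $\MMD(\bP^\star,\bQ) = \|\mu_{\bP^\star} - \mu_\bQ\|_k$ and applying the reverse triangle inequality, this perturbation changes the MMD by at most $w_i\|\phi(\xi_i) - \phi(\xi_i')\|_k \leq 2\sqrt{K}\,w_i$, where the last step uses $\|\phi(\xi)\|_k^2 = k(\xi,\xi) \leq K$ from the kernel-boundedness hypothesis. Taking the expectation over the weights and prior atoms and using $|\bE[A]-\bE[B]| \leq \bE|A-B|$ then yields $|h(\dots,\xi_i,\dots) - h(\dots,\xi_i',\dots)| \leq 2\sqrt{K}\,\bE[w_i]$.

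Next I would compute the constant. The Dirichlet mean gives $\bE[w_i] = \tfrac{1}{n+c}$ (writing $c$ for the concentration parameter $\alpha$, since the parameters sum to $n + \tau\cdot\tfrac{\alpha}{\tau} = n+c$), so the bounded-difference constant is $c_i = \tfrac{2\sqrt{K}}{n+c}$ for each of the $n$ data coordinates; McDiarmid is applied only over these coordinates, as the prior pseudo-samples are integrated out. With $\sum_{i=1}^n c_i^2 = \tfrac{4nK}{(n+c)^2}$, the one-sided bounded differences inequality gives $\bP(h - \bE[h] \geq t) \leq \exp\!\big(-t^2(n+c)^2/(2nK)\big)$. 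Setting the right-hand side equal to $\delta$ and solving yields $t = \tfrac{\sqrt{2nK\log(1/\delta)}}{n+c}$, and combining this with \lem~\ref{lemma:from-npl-mmd} through $h \leq \bE[h] + t$ delivers the claim.

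The main obstacle is the bounded-differences step: one must resist applying a concentration inequality directly to the random $\bQ$, and instead extract a \emph{deterministic} Lipschitz-type constant for the data-function $h$ after integrating out the independent weights and prior atoms. The key observations that make this work are that a single data swap perturbs only one atom of $\bQ$ (so the embedding changes linearly with coefficient $w_i$), that boundedness of the kernel caps $\|\phi(\xi)-\phi(\xi')\|_k$ by $2\sqrt{K}$, and that the Dirichlet mean $\bE[w_i] = 1/(n+c)$ is precisely what produces the $1/(c+n)$ scaling in the deviation term.
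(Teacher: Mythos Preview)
Your proposal is correct and follows essentially the same argument as the paper: define $h(\xi_{1:n})=\bE_{\bQ\sim\dphatxi}[\MMD(\bP^\star,\bQ)]$, establish the bounded-differences constant $2\sqrt{K}/(n+c)$ via the reverse triangle inequality on the RKHS norm together with $\bE[w_i]=1/(n+c)$, apply McDiarmid's inequality, and add the resulting deviation term to the in-expectation bound of \lem~\ref{lemma:from-npl-mmd}. The sequence of steps, the intermediate constants, and the final combination all match the paper's proof.
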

\begin{proof}
    We follow the proof technique of Lemma 1 of \citet{briol2019statistical} based on McDiarmid's inequality \citep{mcdiarmid1989method}. First, notice that by definition of $\dphatxi$ in (\ref{eq:approx-dp}) we can re-write the objective as
    \begin{equation} \begin{split} 
        \bE_{\bQ \sim \dphatxi}[\MMD(\bP^\star, \bQ)] = \bE_{w \sim \Dir} \left[\bE_{\tilde{\xi}_{1:T} \simiid \bF}\left[\MMD(\bP^\star, \widehat{\bQ}_{\xi_{1:n}})\right]\right]
    \end{split} \end{equation}
    where $\widehat{\bQ}_{\xi_{1:n}} = \sum_{i=1}^n w_i \delta_{\xi_i} + \sum_{k=1}^\tau \tilde{w}_k \delta_{\tilde{\xi}_k} \sim \dphatxi$ and $\bE_{w \sim \Dir}$ denotes the expectation under $\Dir\left(1, \dots 1, \frac{c}{\tau}, \dots, \frac{c}{\tau}\right)$. 
    Let $h(\xi_1, \dots, \xi_n) := \bE_{w \sim \Dir} \left[\bE_{\tilde{\xi}_{1:T} \simiid \bF}\left[\MMD(\bP^\star, \widehat{\bQ}_{\xi_{1:n}})\right]\right]$. Then for all $\{\xi_i\}_{i=1}^n, \xi_i' \in \Xi$ and $\xi'_{1:n} := \{\xi_1, \dots, \xi_{i-1}, \xi_i', \xi_{i+1}, \dots, \xi_n\}$ we have 
    \begin{equation} \begin{split} 
        &\left | 
            h(\xi_1, \dots, \xi_{i-1}, \xi_i, \xi_{i+1}, \dots, \xi_n) - h(\xi_1, \dots, \xi_{i-1}, \xi_i', \xi_{i+1}, \dots, \xi_n)
        \right |\\
        &\quad \quad = 
         \left | 
            \bE_{w \sim \Dir} \left[\bE_{\tilde{\xi}_{1:T} \simiid \bF}\left[\MMD(\bP^\star, \widehat{\bQ}_{\xi_{1:n}})\right]\right] - \bE_{w \sim \Dir} \left[\bE_{\tilde{\xi}_{1:T} \simiid \bF}\left[\MMD(\bP^\star, \widehat{\bQ}_{\xi'_{1:n}})\right]\right]
        \right | \\
        &\quad \quad =
         \left | 
            \bE_{w \sim \Dir} \left[\bE_{\tilde{\xi}_{1:T} \simiid \bF}\left[\MMD(\bP^\star, \widehat{\bQ}_{\xi_{1:n}}) - \MMD(\bP^\star, \widehat{\bQ}_{\xi'_{1:n}})\right]\right] 
        \right | \\
        % Jensen's 
        & \quad \quad \leq 
        \bE_{w \sim \Dir} \left[\bE_{\tilde{\xi}_{1:T} \simiid \bF}\left[ \left | \MMD(\bP^\star, \hat{\bQ}_{\xi_{1:n}}) - \MMD(\bP^\star, \hat{\bQ}_{\xi'_{1:n}}) \right | \right]\right] \\
        % by definition of MMD 
        & \quad \quad = 
        \bE_{w \sim \Dir} \left[\bE_{\tilde{\xi}_{1:T} \simiid \bF}\left[ \left | \| \mu_\bP^\star - \mu_{\widehat{\bQ}_{x_{1:n}}} \|_k - \| \mu_\bP^\star - \mu_{\widehat{\bQ}_{\xi'_{1:n}}} \|_k \right | \right]\right] \\
        % reverse triangle inequality
        & \quad \quad \leq 
        \bE_{w \sim \Dir} \left[\bE_{\tilde{\xi}_{1:T} \simiid \bF}\left[ \| \mu_\bP^\star - \mu_{\widehat{\bQ}_{\xi_{1:n}}} - \mu_\bP^\star + \mu_{\widehat{\bQ}_{\xi'_{1:n}}} \|_k \right]\right] \\
        & \quad \quad = 
        \bE_{w \sim \Dir} \left[\bE_{\tilde{\xi}_{1:T} \simiid \bF}\left[  \| w_i (k(\xi_i, \cdot) - k(\xi_i', \cdot)) \|_k \right]\right] \\
        & \quad \quad = \bE_{w \sim \Dir} \left[ w_i \| k(\xi_i, \cdot) - k(\xi_i', \cdot) \|_k \right] \\
        & \quad \quad \leq
        \bE_{w \sim \Dir} [ w_i] 2 \sqrt{K} \\
        & \quad \quad = 
        \frac{2 \sqrt{K}}{c + n} 
    \end{split} \end{equation}
    where in the first inequality we used Jensen's inequality and in the second we used the reverse triangle inequality. Then by McDiarmid's inequality \citep{mcdiarmid1989method} we have that for any $\epsilon > 0$:
    \begin{equation} \begin{split} 
        &\bP \left( \bE_{w \sim \Dir} \left[\bE_{\tilde{\xi}_{1:T} \simiid \bF}\left[\MMD(\bP^\star, \hat{\bQ}_{\xi_{1:n}})\right]\right] \right.\\
        &\left. \quad \quad - \bE_{\xi_{1:n} \simiid \bP^\star} \left[\bE_{w \sim \Dir} \left[\bE_{\tilde{\xi}_{1:T} \simiid \bF}\left[\MMD(\bP^\star, \widehat{\bQ}_{\xi_{1:n}})\right]\right] \right] \geq \epsilon \right) \\
        & \leq \exp{\left(\frac{- 2\epsilon^2}{\frac{4nK}{(n+c)^2}}\right)} \\
        & =
        \exp{\left(\frac{- \epsilon^2 (n+c)^2}{2nK}\right)}
    \end{split} \end{equation}
    Let $\delta :=\exp{\left(\frac{- \epsilon^2 (n+c)^2}{2nK}\right)}$ then it follows that with probability $1 - \delta$:
    \begin{equation} \begin{split} 
        &\bE_{w \sim \Dir} \left[\bE_{\tilde{\xi}_{1:T} \simiid \bF}\left[\MMD(\bP^\star, \hat{\bQ}_{\xi_{1:n}})\right]\right] \\
        & \quad \quad \leq \bE_{\xi_{1:n} \simiid \bP^\star} \left[\bE_{w \sim \Dir} \left[\bE_{\tilde{\xi}_{1:T} \simiid \bF}\left[\MMD(\bP^\star, \hat{\bQ}_{\xi_{1:n}})\right]\right] \right] + \epsilon \\
        &\quad \quad  = \bE_{\xi_{1:n} \simiid \bP^\star} \left[\bE_{w \sim \Dir} \left[\bE_{\tilde{\xi}_{1:T} \simiid \bF}\left[\MMD(\bP^\star, \hat{\bQ}_{\xi_{1:n}})\right]\right] \right] + \frac{\sqrt{2nK \left(\log{\frac{1}{\delta}} \right)}}{n + c}\\
        & \quad \quad \leq \sqrt{\frac{K}{n}} + \sqrt{\frac{2K(n-1) + c(c + 1)}{(c + n)(c + n + 1)}} + \sqrt{\frac{Kc(c + 1)}{(c + n)(c + n + 1)}} \\
        &\quad \quad \quad \quad  + \frac{\sqrt{2nK \left(\log{\frac{1}{\delta}} \right)}}{c + n}
    \end{split} \end{equation}
where the last inequality follows from \lem \ref{lemma:from-npl-mmd}.
\end{proof}
We are now ready to prove the main result. 
\begin{proof}[Proof of Theorem \ref{thm:gen-bound}]
    First note that
    \begin{equation} \begin{split} 
        \MMD(\bP^\star, \bP_{\thetamapk(\bQ)}) - \inf_{\theta \in \Theta} \MMD(\bP_{\theta}, \bP^\star)
        &\leq \MMD(\bP^\star, \bQ) + \MMD(\bQ, \bP_{\thetamapk(\bQ)})  - \inf_{\theta \in \Theta} \MMD(\bP_{\theta}, \bP^\star)\\
        &=\MMD(\bP^\star, \bQ) + \inf_{\theta \in \Theta} \MMD(\bP_{\theta}, \bQ) - \inf_{\theta \in \Theta} \MMD(\bP_{\theta}, \bP^\star) \\
        & \leq \MMD(\bP^\star, \bQ) + \left | 
        \inf_{\theta \in \Theta} \MMD( \bP_{\theta}, \bQ) - \inf_{\theta \in \Theta} \MMD(\bP_{\theta}, \bP^\star) \right | \\
        & \leq \MMD(\bP^\star, \bQ) + \sup_{\theta \in \Theta} \left | 
       \MMD(\bP_{\theta}, \bQ) - \MMD(\bP_{\theta}, \bP^\star) \right | \\
       &\leq \MMD(\bP^\star, \bQ) + \sup_{\theta \in \Theta} \MMD(\bQ, \bP^\star) \\
       &= 2  \MMD(\bP^\star, \bQ) \label{eq:here}
    \end{split} \end{equation}
    where we used the triangle inequality in the first inequality, the definition of $\thetamapk(\cdot)$ in the first equality and the reversed triangle inequality in the last inequality. The third inequality follows from the fact (mentioned in \citet{briol2019statistical}, \thm 1) that since $k$ is bounded, the family $\inf_{\theta \in \Theta} \MMD(\bP_{\theta}, \cdot)$ is uniformly bounded and for bounded functions $f,g$ we have $\|\inf_\theta f(\theta) - \inf_\theta g(\theta) \| \leq \sup_{\theta} | f(\theta) - g(\theta) |$. Hence, by Jensen's inequality, since the MMD is convex, we have 
    \begin{align} \label{eq:ppred}
        \MMD(\truedist, \prednplapprox) = \MMD(\truedist, \bE_{\dphatxi}[\bP_{\thetamapk(\bQ)}]) 
        \leq \bE_{\dphatxi}[\MMD(\truedist, \bP_{\thetamapk(\bQ)})]
    \end{align}
    and therefore
    \begin{equation} \begin{split} 
         \MMD(\truedist, \prednplapprox) - \inf_{\theta \in \Theta} \MMD(\bP_{\theta}, \bP^\star) 
         &\leq \bE_{\bQ \sim \dphatxi}[\MMD(\bP^\star, \bP_{\thetamapk(\bQ)})] - \inf_{\theta \in \Theta} \MMD(\bP_{\theta}, \bP^\star) \\
        & \leq
        2  \bE_{\bQ \sim \dphatxi}[ \MMD(\bP^\star, \bQ)]. 
    \end{split} \end{equation}
    By \lem \ref{lemma:conc-pstar-q} it follows that with probability $1 - \delta$ the advertised result holds. 
\end{proof}

\subsection{Proof of Corollary \ref{cor:hub-upper-bd}.}
Recall that $\bP^\star = (1 - \eta) \bP_{\theta_0} + \eta \bQ$ for some $\theta_0 \in \Theta$, $\bQ \in \cP(\Xi)$ and $\eta \in [0,1]$. Corollary \ref{cor:hub-upper-bd} is an immediate consequence of Lemma 3.3 of \citet{cherief2022finite} which states that for any $\theta \in \Theta$:
\begin{align}  \label{eq:robas:cont-ineq}
    | \MMD(\bP_\theta, \truedist) - \MMD(\bP_\theta, \bP_{\theta_0}) | \leq  2 \eta.
\end{align}
Using this and Theorem \ref{thm:gen-bound} we can prove the advertised result as follows:
\begin{proof}[Proof of Corollary \ref{cor:hub-upper-bd}]
    It follows by (\ref{eq:robas:cont-ineq}) that:
    \begin{align}
    \MMD(\bP_{\thetamapk(\bQ)}, \bP_{\theta_0}) \leq 2 \eta + \MMD(\bP_{\thetamapk(\bQ)}, \truedist).
    \end{align}
    and hence 
    \begin{align} \label{eq:here2}
         \bE_{\dphatxi}[\MMD(\bP_{\thetamapk(\bQ)}, \bP_{\theta_0})] \leq 2 \eta +  \bE_{\dphatxi}[\MMD(\bP_{\thetamapk(\bQ)}, \truedist)].
    \end{align}

    We then have:
    \begin{align*}
    \MMD(\bP_{\theta_0}, \prednplapprox)
    &\leq \bE_{\dphatxi}[\MMD(\bP_{\theta_0}, \bP_{\thetamapk})] \\
    & \leq 2 \bE_{\dphatxi}[\MMD(\bP_{\theta_0}, \bQ)] + \inf_\theta \MMD(\bP_{\theta_0}, \bP_\theta) \\
    &= 2 \bE_{\dphatxi}[\MMD(\bP_{\theta_0}, \bQ)] \\
    & \leq 4 \eta + 2 \bE_{\dphatxi}[\MMD(\bP^\star, \bQ)]
    \end{align*}
    and the result follows from Lemma \ref{lemma:conc-pstar-q}. The first inequality follows from Jensen's inequality as in (\ref{eq:ppred}), the second inequality follows from \eqref{eq:here}, the equality holds from the fact that $\bP_{\theta_0} \in \cP_\Theta$ and the last inequality follows form \eqref{eq:here2}.
\end{proof}

\section{Additional Experimental Details}
\label{app:exps}
We implemented the dual problems for \drorobas (\Cref{cor:bas-dro-mmd-dual}) in Python using CVXPY version 1.5.2 and the MOSEK solver version 10.1.28 following the implementation of Kernel DRO algorithms by \citet{zhu2021kernel}. We used 200 discretisation points for the constraint, as described in \Cref{para:discretis}. For the implementation of the NPL-MMD posterior, we follow the median heuristic suggested by \citet{gretton2012kernel} and the ADAM optimiser \citep{kingma2014adam} with learning rate $h = 0.1$. For the DP approximation we used $\alpha = 0$ and truncation limit $\tau=100$. For all Bayesian methods we use $N = 900$ total Monte Carlo samples to approximate the expectations in the optimisation objectives.

The out-of-sample mean and variance were calculated as:
\begin{equation}
\begin{split}
    m(\epsilon) &= \frac{1}{JT}\sum_{j=1}^J \sum_{t=1}^T f(x^{(j)}_\epsilon, \xi_{n+t}),\\
    v(\epsilon) &= \frac{1}{JT-1}\sum_{j=1}^J \sum_{t=1}^T \left(f(x^{(j)}_\epsilon, \xi_{n+t}) - m(\epsilon)\right)^2,
\end{split}
\end{equation}
where $x^{(j)}_\epsilon$ is the obtained solution on training dataset~$\xi_{1:n}^{(j)}$ and we set $T = 50$ for the number of test observations.

Detailed results on the average solve and sampling times and associated standard deviations are provided in Tables \ref{tab:solve-time-app} and \ref{tab:sample-time-app}.

\paragraph{Data-generating Process Settings}
For the Newsvendor problem, we first considered a Gaussian distribution with known variance, i.e. $\bP_\theta := \cN(\theta, \sigma^2 \cI_{D \times D})$ while the DGP is a bimodal Gaussian distribution:  $\truedist:= 0.5 \cN(\theta^\star_1, \sigma^2 \cI_{D \times D}) + 0.5 \cN(\theta^\star_2, \sigma^2 \cI_{D \times D})$. We considered a univariate ($d=1$) with $(\theta^\star_1, \theta^\star_2, \sigma) = (10, 60, 5)$ and a multivariate case ($d=5$) with $\theta^\star_1 = (10,20,33,22,25)$, $\theta^\star_2 = (60,60,60,60,60)$ and $\sigma^2 = 5$. Furthermore, we explore the same Gaussian model 
    $\bP_\theta:= \cN(\theta, \sigma^2 \cI_{D \times D})$ 
    with a contaminated Gaussian training DGP:
    $\truedist_{\text{train}}:= (1 -\eta) \cN(\theta^\star, \sigma^2 ) + \eta \cN(\theta^\prime, \sigma^2)$ 
    where we set $(\theta^\star, \theta^\prime, \sigma) = (25, 75, 5)$ and
    for $\eta \in \{0.0, 0.1, 0.2\}$ and an Exponential model $\bP_\theta := \text{Exp}(\theta)$ with a contaminated Exponential DGP for the training DGP: 
    $\bP^\star_{\text{train}} := (1 - \eta) \text{Exp}(0.05) + \eta \cN(100, 0.5)$ 
     $\bP^\star_{\text{train}} := (1 - \eta) \text{Exp}(\theta^\star) + \eta \cN(\mu, \sigma)$ 
    for $\eta \in \{0.0, 0.1, 0.2\}$.

\begin{table*}[!ht]
    \centering
    \caption{Average (standard derivation) {\em solve time} in seconds of algorithms across different DGPs}
        \label{tab:solve-time-app}
    \begin{small}
 . \begin{sc}
    \begin{tabular}{lrrrrr}
    \toprule
     DGP & RoBAS & Empirical MMD & KL-BDRO & \drobaspe & \drobaspp \\
    \midrule
1D $\cN$ bimodal & 275.33 (25.98) & 2.17 (0.25) & 3.90 (0.55) & 0.29 (0.03) & 0.30 (0.05) \\
5D $\cN$ bimodal & 262.94 (31.72) & 2.18 (0.36) & 2.63 (0.26) & 1.24(0.19) & 1.38 (0.29) \\
$\cN, \eta= 0.0$ & 286.27 (16.14) & 2.22 (0.28) & 4.01 (0.49) & 0.29 (0.04) & 0.30 (0.04) \\
$\cN, \eta= 0.1$ & 284.62 (17.42) & 2.23 (0.24)& 3.99 (0.42) &  0.29 (0.03) & 0.30 (0.04) \\
$\cN, \eta= 0.2$ & 260.67 (35.90) & 2.20 (0.24) & 3.74 (0.52) & 0.28 (0.03) & 0.28 (0.04)\\
Exp, $\eta=0.0$& 311.74 (37.48) & 5.59 (0.68) & 0.43 (0.06) & 0.40 (0.06) & 0.49 (0.06) \\
Exp, $\eta=0.1$  & 296.65 (46.10) & 5.39 (0.54) & 0.44 (0.06) & 0.42 (0.05) & 0.50 (0.06) \\
Exp, $\eta=0.2$ & 282.55 (53.23) & 5.22 (0.47) & 0.44 (0.06) & 0.42 (0.05) & 0.50 (0.06) \\
    \bottomrule
    \end{tabular}
     \end{sc}
    \end{small}
\end{table*}

\begin{table*}[!ht]
    \centering
      \caption{Average (standard derivation) {\em sample time} in seconds of algorithms across different DGPs.}
       \label{tab:sample-time-app}
    \begin{small}
    \begin{sc}
    \begin{tabular}{lrrrrr}
    \toprule
     DGP & RoBAS & Empirical MMD & KL-BDRO & \drobaspe & \drobaspp \\
    \midrule
1D $\cN$ bimodal & 15.65 (4.76) & 0.0 (0.0) &  0.001 (0.002) & 0.0002 (0.0) & 0.0002 (0.0) \\
5D $\cN$ bimodal & 15.86 (4.84) & 0.0 (0.0) & 0.0104 (0.0004) & 0.0006 (0.0003) & 0.001 (0.0004) \\
$\cN, \eta= 0.0$ & 15.85 (4.67) & 0.0 (0.0) &0.0009 (0.002) & 0.0002 (0.0) & 0.0002 (0.0)\\
$\cN, \eta= 0.1$ & 15.78 (4.59) & 0.0 (0.0) & 0.0009 (0.0017) & 0.0002 (0.0) & 0.0002 (0.0)\\
$\cN, \eta= 0.2$ & 15.68 (4.58) & 0.0 (0.0) & 0.0008 (0.0014) & 0.0002 (0.0) & 0.0002 (0.0) \\
Exp, $\eta=0.0$ & 38.10 (5.62) & 0.0 (0.0) & 0.1 (0.01) & 0.0 (0.0) & 0.0 (0.0) \\
Exp, $\eta=0.1$ & 37.77 (4.78) & 0.0 (0.0) & 0.1 (0.01) & 0.0 (0.0) & 0.0 (0.0) \\
Exp, $\eta=0.2$ & 37.82 (4.85) & 0.0 (0.0) & 0.1 (0.01) & 0.0 (0.0) & 0.0 (0.0) \\
\bottomrule
    \end{tabular}
    \end{sc}
    \end{small}
\end{table*}

\section{Alternative \robas formulations} \label{sec:alt}
We now discuss alternative \robas formulations as the ones provided in \citet{ dellaporta2024decision} through the \drobas framework. In particular, in the \drobas case the authors explored two distinct formulations of ambiguity sets: the \baspp (\ref{eq:dro-bas-pp}), mirroring the definition of $\cB_\epsilon^k(\prednpl)$, and the \baspe (\ref{eq:dro-bas-pe}) which is based on a posterior expectation by considering the expected KL distance to the model family. 

In the case of robust Bayesian ambiguity sets, an interesting connection can be made between $\cB_\epsilon(\prednpl)$ and the ambiguity set based on the expected squared MMD, namely:
\begin{align} \label{eq:robas:exp-sqred-mmd}
    \cA^k_\epsilon(\DP(c^\prime, \bF^\prime)) 
    =  \left\{\bP \in \cP_k(\Xi): \bE_{\bQ \sim \DP(c^\prime, \bF^\prime)}\left[
    \MMD^2(\bP, \bP_{\theta_k(\bQ)}) \right] \right\}.
\end{align}
The squared MMD has previously been used in Generalised Bayesian Inference to define a loss function by \citet{cherief2020mmd}. By using the properties of the MMD, we can prove the following equivalence:
\begin{lemma} \label{prp:robas:equiv}
    Let $\bE_{\dpxi}$ denote the expectation under $\bE_{\bQ \sim \DP(c^\prime, \bF^\prime)}$ and suppose:
    \begin{equation}
    \begin{split} \label{eq:g-npl}
     v(\DP) &:= \bE_{\dpxi}\left[\left< \mu_{\bP_{\theta_k(\bQ)}}, \mu_{\bP_{\theta_k(\bQ)}}\right>\right] - \left<\bE_{\dpxi}[\mu_{\bP_{\theta_k(\bQ)}}], \bE_{\dpxi}[\mu_{\bP_{\theta_k(\bQ)}}] \right> \\
    &= \bE_{\dpxi}\left[\left\|\mu_{\bP_{\theta_k(\bQ)}} \right\|^2_{\cH_k}\right] - \left \| \bE_{\dpxi}[\mu_{\bP_{\theta_k(\bQ)}}] \right\|^2_{\cH_k} \\
    &< \infty.
    \end{split}
    \end{equation}
    Then for any $\bP \in \cP_k(\Xi)$:
    \begin{align} \label{eq:robas:mmd-eq}
        \bE_{\dpxi} \left[\MMD^2(\bP, \bP_{\theta_k(\bQ)}) \right] =  
        \MMD^2(\bP, \prednpl) + v(\DP)
    \end{align}
     and for any $\epsilon \geq 0$:
    \begin{align} \label{eq:robas:inclusion}
    \cB_{\epsilon}^k(\prednpl) \equiv \cA^k_{\epsilon^2 + v(\DP)}(\DP(c^\prime, \bF^\prime)).
    \end{align}
\end{lemma}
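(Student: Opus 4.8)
The plan is to pass everything through the kernel mean embedding and reduce the claimed identity to the familiar Hilbert-space bias--variance decomposition. Recall from \lem 4 of \citet{gretton2012kernel} that $\MMD(\bP,\bQ)=\|\mu_\bP-\mu_\bQ\|_k$, and from the proof of \Cref{prop:support-fn} that the embedding of the predictive is the expectation of the per-sample embeddings, $\mu_{\prednpl}=\bE_{\dpxi}[\mu_{\bP_{\theta_k(\bQ)}}]$. Writing $\nu(\bQ):=\mu_{\bP_{\theta_k(\bQ)}}$ and $\bar\nu:=\mu_{\prednpl}=\bE_{\dpxi}[\nu(\bQ)]$, the finiteness hypothesis \eqref{eq:g-npl} is exactly the Bochner integrability that licenses interchanging $\bE_{\dpxi}$ with the inner product below.

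First I would prove the identity \eqref{eq:robas:mmd-eq}. Using the isometry, $\MMD^2(\bP,\bP_{\theta_k(\bQ)})=\|\mu_\bP-\nu(\bQ)\|_k^2$. I insert $\pm\,\bar\nu$ and expand, obtaining
\begin{align*}
    \|\mu_\bP-\nu(\bQ)\|_k^2 = \|\mu_\bP-\bar\nu\|_k^2 + 2\left<\mu_\bP-\bar\nu,\,\bar\nu-\nu(\bQ)\right>_{\cH_k} + \|\bar\nu-\nu(\bQ)\|_k^2.
\end{align*}
Taking $\bE_{\dpxi}$ of both sides, the first term is deterministic and equals $\MMD^2(\bP,\prednpl)$; the cross term vanishes because $\bE_{\dpxi}[\bar\nu-\nu(\bQ)]=\bar\nu-\bE_{\dpxi}[\nu(\bQ)]=0$ and $\mu_\bP-\bar\nu$ is non-random; and the last term gives $\bE_{\dpxi}\|\nu(\bQ)\|_k^2-\|\bar\nu\|_k^2=v(\DP)$ after the standard $\bE\|X-\bE X\|^2=\bE\|X\|^2-\|\bE X\|^2$ manipulation. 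This yields $\bE_{\dpxi}[\MMD^2(\bP,\bP_{\theta_k(\bQ)})]=\MMD^2(\bP,\prednpl)+v(\DP)$, which is \eqref{eq:robas:mmd-eq}.

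For the set equivalence \eqref{eq:robas:inclusion} I would substitute the identity into the defining constraint of $\cA^k_{\epsilon^2+v(\DP)}(\DP(c',\bF'))$. A distribution $\bP\in\cP_k(\Xi)$ lies in that set iff $\bE_{\dpxi}[\MMD^2(\bP,\bP_{\theta_k(\bQ)})]\le\epsilon^2+v(\DP)$, i.e. iff $\MMD^2(\bP,\prednpl)+v(\DP)\le\epsilon^2+v(\DP)$; cancelling the common (finite) $v(\DP)$ leaves $\MMD^2(\bP,\prednpl)\le\epsilon^2$, which for $\epsilon\ge0$ is equivalent to $\MMD(\bP,\prednpl)\le\epsilon$, the defining condition of $\cB^k_\epsilon(\prednpl)$. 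Hence the two sets coincide.

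The only genuine subtlety, rather than the routine algebra, is justifying the exchange of $\bE_{\dpxi}$ and the Hilbert-space inner product so that the cross term truly vanishes and the variance term is well defined; this is precisely what the assumption $v(\DP)<\infty$ in \eqref{eq:g-npl} secures, since it guarantees $\nu(\bQ)$ is square-Bochner-integrable in $\cH_k$. I expect this integrability bookkeeping to be the main (and only mild) obstacle, after which both parts follow immediately.
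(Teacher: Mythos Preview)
Your proposal is correct and is essentially the same argument as the paper's: both reduce \eqref{eq:robas:mmd-eq} to the Hilbert-space bias--variance decomposition of $\|\mu_\bP-\mu_{\bP_{\theta_k(\bQ)}}\|_k^2$ around $\bar\nu=\bE_{\dpxi}[\mu_{\bP_{\theta_k(\bQ)}}]$, and then derive \eqref{eq:robas:inclusion} by substituting the identity into the set definitions. The only cosmetic difference is that the paper expands the three inner-product terms and adds/subtracts $\langle\bar\nu,\bar\nu\rangle$, whereas you insert $\pm\bar\nu$ inside the norm before expanding; your explicit remark about Bochner integrability justifying the interchange is a welcome clarification the paper leaves implicit.
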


\begin{proof}[Proof of \Cref{prp:robas:equiv}]
    The proof is based on the reproducing property of the RKHS. Recall that $\bE_{\dpxi}$ denotes the expectation under $\bE_{\bQ \sim \DP(c^\prime, \bF^\prime)}$. For any $\bP \in \cP_k(\Xi)$ and $\epsilon \geq 0$ we have:
    \begin{equation} \label{eq:expected-mmd}
        \begin{split}
            &\bE_{\dpxi}[\MMD^2(\bP, \bP_{\thetamapk(\bQ)})] \\
            &\quad = \left< \mu_{\bP}, \mu_{\bP} \right>_{\cH_k} - 2 \bE_{\dpxi}\left[
            \left<\mu_{\bP}, \mu_{\bP_{\thetamapk(\bQ)}} \right>_{\cH_k}
            \right] +
            \bE_{\dpxi}\left[
        \left<\mu_{\bP_{\thetamapk(\bQ)}}, \mu_{\bP_{\thetamapk(\bQ)}} \right>_{\cH_k}
            \right] \\
            &\quad = \left< \mu_{\bP}, \mu_{\bP} \right>_{\cH_k} - 2 \bE_{\dpxi}\left[
            \left<\mu_{\bP}, \mu_{\bP_{\thetamapk(\bQ)}} \right>_{\cH_k}
            \right] + \left< \bE_{\dpxi}[\mu_{\bP_{\thetamapk(\bQ)}}] , \bE_{\dpxi}[\mu_{\bP_{\thetamapk(\bQ)}}]\right>_{\cH_k} \\
            &\quad \quad - \left< \bE_{\dpxi}[\mu_{\bP_{\thetamapk(\bQ)}}] , \bE_{\dpxi}[\mu_{\bP_{\thetamapk(\bQ)}}]\right>_{\cH_k} +
            \bE_{\dpxi}\left[
        \left<\mu_{\bP_{\thetamapk(\bQ)}}, \mu_{\bP_{\thetamapk(\bQ)}} \right>_{\cH_k}
            \right] \\
            &\quad = \MMD^2(\bQ, \bE_{\dpxi}[\bP_{\thetamapk(\bQ)}]) + v(\DP). 
        \end{split}
    \end{equation}
        This proves the statement of Equation \ref{eq:robas:mmd-eq}. To prove the statement of Equation \ref{eq:robas:inclusion} recall that 
        \begin{align}
               \cB^k_{\epsilon}(\prednpl) &:= \{\bP \in \cP_k(\Xi): \MMD(\bP, \prednpl) \leq \epsilon\}\\
               &= \{\bP \in \cP_k(\Xi): \MMD(\bP, \bE_{\dpxi}[\bP_{\thetamapk(\bQ)}]) \leq \epsilon\}
        \end{align}
        Using (\ref{eq:expected-mmd}) we have that for $\epsilon \geq 0$ and some $\bP \in \cB_\epsilon^k(\prednpl)$:
        \begin{align}
            & \MMD(\bP, \bE_{\dpxi}[\bP_{\thetamapk(\bQ)}]) \leq \epsilon \\
            \iff &\MMD^2(\bP, \bE_{\dpxi}[\bP_{\thetamapk(\bQ)}]) \leq \epsilon^2 \\
            \iff &\bE_{\dpxi}[\MMD^2(\bP, \bP_{\thetamapk(\bQ)})] \leq \epsilon^2 + v(n) \\
            \iff &\bP \in \cA^k_{\epsilon^2 + v(\DP)}(\DP(c^\prime, \bF^\prime)).
        \end{align}
hence, $\cB_{\epsilon}^k(\prednpl) \equiv \cA^k_{\epsilon^2 + v(\DP)}(\DP(c^\prime, \bF^\prime))$, which completes the proof.
\end{proof}

The constant $v(\DP)$ (\ref{eq:g-npl}) is an effective variance term of the kernel mean embedding $\mu_{\bP_{\theta_k(\bQ)}}$ in the RKHS under the nonparametric posterior $\DP \equiv \DP(c^\prime, \bF^\prime)$. It follows directly from Lemma \ref{prp:robas:equiv} that the primal \drorobas problem in (\ref{eq:dro-robas-pp}) is equivalent to the following optimisation problem:
\begin{align}\label{eq:robas:pp}
    \min_{x \in \cX} \sup_{\displaystyle\color{robas}\bP: \bE_{\dpxi}[\MMD^2(\bP,\bP_{\theta_k(\bQ)})] \leq \epsilon^2 + v(\DP)}~{\bE_{x\sim\bP}~\left[ f(z,x) \right]}
\end{align}
or equivalently,
\begin{align} \label{eq:dro-robas-pp-equiv}
    \min_{x \in \cX} \sup_{\displaystyle\color{robas}\bP \in \cA_{\epsilon^2 + v(\DP)}^k(\prednpl)} \bE_{\xi \sim \bP}[f_x(\xi)].
\end{align}
Since these two problems are equivalent, we refer to both of them as \drorobas.

An alternative formulation could be defined, based on the expected MMD distance i.e. by considering the set
\begin{align}
    \left\{\bP \in \cP_k(\Xi): \bE_{\dpxi} \left[\MMD(\bP, \bP_{\theta_k(\bQ)}) \right] \leq \epsilon \right\}
\end{align}
for tolerance level $\epsilon \geq 0$. This mirrors the \baspe in \citet{dellaporta2024decision} and could lead to a differently shaped ambiguity set as it corresponds to an expected MMD-ball. However, deriving the dual formulation for the corresponding worst-case DRO problem based on this set is challenging as it involves deriving the convex conjugate of the expected MMD distance $(\bE_{\dpxi}[\MMD(\cdot, \bP_{\theta_k(\bQ)})])^*$ or equivalently the support function $\delta^\star_{\cC}(g)$ of the corresponding set $\cC := \{\mu \in \cH_k: \bE_{\bQ \sim \DP(c', \bF')} \left [\| \mu - \mu_{\bP_{\theta_k(\bQ)}} \|_{k} \right] \leq \epsilon \}$. This is not trivial and it has not been tackled in this work, however it poses itself as important future work as it can open the way to ambiguity sets whose shape itself is informed by the posterior rather than just their nominal distribution as in \robas.

\end{document}